  \newcommand{\OCMI}{Vovk/etal:2009AOS}
  \newcommand{\OCMVII}{Vovk/Petej:arXiv1211}
  \newcommand{\OCMXVII}{Vovk/etal:2017COPA}
  \newcommand*{\GTPL}{GTP50}
  \newtheorem{theorem}{Theorem}
  \newtheorem{lemma}[theorem]{Lemma}
  \theoremstyle{definition}
  \newtheorem{definition}[theorem]{Definition}
  \newtheorem{example}[theorem]{Example}
  \newtheorem{remark}[theorem]{Remark}
\DeclareMathOperator{\Prob}{\mathbb{P}}
\DeclareMathOperator{\Expect}{\mathbb{E}}
\newcommand{\R}{\mathbb{R}}
\newcommand{\PPP}{\mathcal{P}}
\newcommand{\FFF}{\mathcal{F}}
\newcommand{\GGG}{\mathcal{G}}
\newcommand{\dd}{\mathrm{d}}
\newlength{\dhatheight}
  \title{Universally consistent predictive distributions}
  \author{Vladimir Vovk}
\begin{document}
  \maketitle

\begin{abstract}
  This paper describes simple universally consistent procedures of probability forecasting
  that satisfy a natural property of small-sample validity,
  under the assumption that the observations are produced independently in the IID fashion.

    \bigskip

    \noindent
    The version of this paper at \href{http://alrw.net}{http://alrw.net}
    (Working Paper 18, first posted on 17 April 2017) is updated most often.
\end{abstract}

\section{Introduction}
\label{sec:introduction}

Predictive distributions are probability distributions for a future value of a response variable
satisfying a natural property of validity.
They were introduced independently by Schweder and Hjort \cite[Chapter~12]{Schweder/Hjort:2016}
and Shen et al.\ \cite{Shen/etal:2017},
who also gave several examples of predictive distributions in parametric statistics.
Earlier, related notions had been studied extensively by Tilmann Gneiting with co-authors
and their predecessors
(see, e.g., the review \cite{Gneiting/Katzfuss:2014}).
First nonparametric predictive distributions were constructed in \cite{\OCMXVII}
based on the method of conformal prediction
(see, e.g., \cite{Vovk/etal:2005book,\OCMI,Lei/etal:2013,Lei/Wasserman:2014}).
The nonparametric statistical model used in \cite{\OCMXVII} is the one that is standard in machine learning:
the observations are produced independently from the same probability measure;
we will refer to it as the \emph{IID model} in this paper (as we did in \cite{\OCMI}).
To make the notion of predictive distributions applicable in the nonparametric context,
\cite{\OCMXVII} slightly generalizes it allowing randomization;
unless the amount of training data is very small, randomization affects the predictive distribution very little,
but it simplifies definitions.

This paper follows \cite{\OCMXVII} in studying randomized predictive distributions under the IID model.
Namely, we construct randomized predictive distributions that,
in addition to the small-sample property of validity that is satisfied automatically,
satisfy an asymptotic property of universal consistency;
informally, the true conditional distribution of the response variable 
and the randomized predictive distribution for it computed from the corresponding predictor and training data of size $n$
approach each other as $n\to\infty$.
(The procedures studied in \cite{\OCMXVII} were based on the Least Squares procedure
and far from universally consistent;
cf.\ Example~\ref{ex:LSPM} below.)

Our approach is in the spirit of Gneiting et al.'s \cite{Gneiting/etal:2007} paradigm
(which they trace back to Murphy and Winkler \cite{Murphy/Winkler:1987})
of \emph{maximizing the sharpness of the predictive distributions subject to calibration}.
We, however, refer to calibration as validity, sharpness as efficiency,
and include a validity requirement in the definition of predictive distributions
(following Shen et al.\ \cite{Shen/etal:2017}).

We are mostly interested in results about the existence (and in explicit constructions) of randomized predictive distributions
that satisfy two appealing properties:
the small-sample property of validity and the asymptotic property of universal consistency.
However, if we do not insist on the former, randomization becomes superfluous (Theorem~\ref{thm:PFS}).

As in \cite{\OCMXVII}, our main technical tool will be conformal prediction.
Before \cite{\OCMXVII},
conformal prediction was typically applied for computing prediction sets.
Conformal predictors are guaranteed to satisfy a property of validity,
namely the right coverage probability,
and a remaining desideratum is their efficiency,
namely the smallness of their prediction sets.
Asymptotically efficient conformal predictors
were constructed by Lei et al.\ \cite{Lei/etal:2013} in the unsupervised setting
and Lei and Wasserman \cite{Lei/Wasserman:2014} in the supervised setting (namely, for regression).
This paper can be considered another step in this direction,
where the notion of efficiency is formalized as universal consistency.

For convenience, in this paper we will refer to procedures producing randomized predictive distributions as predictive systems;
in particular, conformal predictive systems are procedures producing conformal predictive distributions,
i.e., randomized predictive systems obtained by applying the method of conformal prediction.

The main result of this paper (Theorem~\ref{thm:CPS}) is that there exists a universally consistent conformal predictive system,
in the sense that it produces predictive distributions
that are consistent under any probability distribution for one observation.
The notion of consistency is used in an unusual situation here,
and our formalization is based on Belyaev's
  \cite{Belyaev:1995,Belyaev/Sjostedt:2000,Sjostedt:2005}\
notion of weakly approaching sequences of distributions.
The construction of a universally consistent conformal predictive system
adapts standard arguments for universal consistency in classification and regression
\cite{Stone:1977,Devroye/etal:1996,Gyorfi/etal:2002}.

We start in Section~\ref{sec:RPD} from defining randomized predictive systems,
which are required to satisfy the small-sample property of validity under the IID model.
The next section, Section~\ref{sec:CPD}, defines conformal predictive systems,
which are a subclass of randomized predictive systems.
The main result of the paper, Theorem~\ref{thm:CPS} stated in Section~\ref{sec:UC-CPD},
requires a slight generalization of conformal predictive systems
(for which we retain the same name).
Section~\ref{sec:MPD} introduces another subclass of randomized predictive systems,
which is wider than the subclass of conformal predictive systems of Section~\ref{sec:CPD};
the elements of this wider subclass are called Mondrian predictive systems.
A simple version of Theorem~\ref{thm:CPS} given in Section~\ref{sec:UC-MPD} (Theorem~\ref{thm:MPS})
states the existence of Mondrian predictive systems that are universally consistent.
An example of a universally consistent Mondrian predictive system is given in Section~\ref{sec:HMPD},
and Section~\ref{sec:proof-Mondrian} is devoted to a short proof that this predictive system is indeed universally consistent.
Section~\ref{sec:proof-PFS} gives an even shorter proof of the existence
of a universally consistent probability forecasting system (Theorem~\ref{thm:PFS}),
which is deterministic and not required to satisfy any small-sample properties of validity.
Theorem~\ref{thm:CPS} stated in Section~\ref{sec:UC-CPD} asserts the existence of universally consistent conformal predictive systems.
An example of such a conformal predictive system is given in Section~\ref{sec:HCPD},
and it is shown in Section~\ref{sec:proof-conformal} to be universally consistent.
One advantage of Theorem~\ref{thm:CPS} over the result of Section~\ref{sec:UC-MPD} (Theorem~\ref{thm:MPS}) is that,
as compared with Mondrian predictive systems,
conformal predictive systems enjoy a stronger small-sample property of validity
(see Remarks~\ref{rem:R2} and~\ref{rem:advantage}).
In conclusion, Section~\ref{sec:conclusion} lists some natural directions of further research.

\begin{remark}
  There is a widely studied sister notion to predictive distributions
  with a similar small-sample guarantee of validity,
  namely confidence distributions: see, e.g., \cite{Xie/Singh:2013}.
  Both confidence and predictive distributions go back to Fisher's fiducial inference.
  Whereas, under the nonparametric IID model of this paper,
  there are no confidence distributions,
  \cite{\OCMXVII} and this paper argue that there is a meaningful theory of predictive distributions
  even under the IID model.
\end{remark}

\section{Randomized predictive distributions}
\label{sec:RPD}

In this section we give some basic definitions partly following \cite{Shen/etal:2017} and \cite{\OCMXVII}.
Let $\mathbf{X}$ be a measurable space, which we will call the \emph{predictor space}.
The \emph{observation space} is defined to be $\mathbf{Z}:=\mathbf{X}\times\R$;
its element $z=(x,y)$, where $x\in\mathbf{X}$ and $y\in\R$, is interpreted as an \emph{observation}
consisting of a \emph{predictor} $x\in\mathbf{X}$
and a \emph{response variable} (or simply \emph{response}) $y\in\R$.
Our task is, given \emph{training data} consisting of observations $z_i=(z_i,y_i)$, $i=1,\ldots,n$,
and a new (test) predictor $x_{n+1}\in\mathbf{X}$,
to predict the corresponding response $y_{n+1}$;
the pair $(x_{n+1},y_{n+1})$ will be referred to as the test observation.
We will be interested in procedures whose output is independent of the ordering of the training data $(z_1,\ldots,z_n)$;
therefore, the training data can also be interpreted as a multiset rather than a sequence.

Let $U$ be the uniform probability measure on the interval $[0,1]$.
\begin{definition}\label{def:RPS}
  A measurable function
  $Q:\cup_{n=1}^{\infty}(\mathbf{Z}^{n+1}\times[0,1])\to[0,1]$
  is called a \emph{randomized predictive system} if it satisfies the following requirements:
  \begin{itemize}
  \item[R1]
    \begin{itemize}
    \item[i]
      For each $n$, each training data sequence $(z_1,\ldots,z_n)\in\mathbf{Z}^n$,
      and each test predictor $x_{n+1}\in\mathbf{X}$,
      the function $Q(z_1,\ldots,z_n,(x_{n+1},y),\allowbreak\tau)$ is monotonically increasing in both $y$ and $\tau$
      (i.e., monotonically increasing in $y$ for each $\tau$ and monotonically increasing in $\tau$ for each~$y$).
    \item[ii]
      For each $n$, each training data sequence $(z_1,\ldots,z_n)\in\mathbf{Z}^n$,
      and each test predictor $x_{n+1}\in\mathbf{X}$,
      \begin{align}
        \lim_{y\to-\infty}
        Q(z_1,\ldots,z_n,(x_{n+1},y),0)
        &=
        0,
        \label{eq:R1-first}\\
        \lim_{y\to\infty}
        Q(z_1,\ldots,z_n,(x_{n+1},y),1)
        &=
        1.
        \notag
      \end{align}
    \end{itemize}
  \item[R2]
    For each $n$,
    the distribution of $Q$,
    as function of random training observations $z_1\sim P$,\ldots, $z_n\sim P$,
    a random test observation $z_{n+1}\sim P$,
    and a random number $\tau\sim U$,
    all assumed independent,
    is uniform:
    \begin{equation}\label{eq:R2}
      \forall \alpha\in[0,1]:
      \Prob
      \left(
        Q(z_1,\ldots,z_n,z_{n+1},\tau)
        \le
        \alpha
      \right)
      =
      \alpha.
    \end{equation}
  \end{itemize}
  The function $Q(z_1,\ldots,z_n,(x_{n+1},\cdot),\tau)$ is the \emph{predictive distribution (function)}
  output by $Q$ for given training data $z_1,\ldots,z_n$, test predictor $x_{n+1}$, and $\tau\in[0,1]$.
\end{definition}

Requirement R1 says, essentially, that, as a function of $y$,
$Q$ is a distribution function, apart from a slack caused by the dependence on the random number $\tau$.
The size of the slack is
\begin{equation}\label{eq:difference}
  Q(z_1,\ldots,z_n,(x_{n+1},y),1)
  -
  Q(z_1,\ldots,z_n,(x_{n+1},y),0)
\end{equation}
(remember that $Q$ is monotonically increasing in $\tau\in[0,1]$, according to requirement R1(i)).
In typical applications the slack will be small unless there is very little training data;
see Remark~\ref{rem:difference} for details.

Requirement~R2 says, informally, that the predictive distributions
agree with the data-generating mechanism.
It has a long history in the theory and practice of forecasting.
The review \cite{Gneiting/Katzfuss:2014} refers to it as probabilistic calibration
and describes it as critical in forecasting;
\cite[Section~2.2.3]{Gneiting/Katzfuss:2014} reviews the relevant literature.

\begin{remark}
  Requirements R1 and R2 are the analogues
  (introduced in \cite[Chapter~12]{Schweder/Hjort:2016} and \cite{Shen/etal:2017})
  of similar requirements
  in the theory of confidence distributions:
  see, e.g., \cite[Definition~1]{Xie/Singh:2013} or \cite[Chapter~3]{Schweder/Hjort:2016}.
\end{remark}

\begin{definition}\label{def:consistency}
  Let us say that a randomized predictive system $Q$ is \emph{consistent}
  for a probability measure $P$ on $\mathbf{Z}$ if,
  for any bounded continuous function $f:\R\to\R$,
  \begin{equation}\label{eq:goal-1}
    \int f \dd Q_n - \Expect_P(f\mid x_{n+1})
    \to
    0
    \qquad
    (n\to\infty)
  \end{equation}
  in probability,
  where:
  \begin{itemize}
  \item
    $Q_n$ is the predictive distribution
    $
      Q_n: y\mapsto Q(z_1,\ldots,z_n,(x_{n+1},y),\tau)
    $
    output by $Q$ as its forecast for the response $y_{n+1}$ corresponding to the test predictor $x_{n+1}$
    based on the training data $(z_1,\ldots,z_n)$, where $z_i=(x_i,y_i)$;
  \item
    $\Expect_P(f\mid x_{n+1})$ is the conditional expectation of $f(y)$ given $x=x_{n+1}$
    under $(x,y)\sim P$;
  \item
    $z_i=(x_i,y_i)\sim P$, $i=1,\ldots,n+1$, and $\tau\sim U$, are assumed all independent.
  \end{itemize}
\end{definition}

It is clear that the notion of consistency given in Definition~\ref{def:consistency}
does not depend on the choice of the version of the conditional expectation $\Expect_P(f\mid\cdot)$ in~\eqref{eq:goal-1}.
The integral in \eqref{eq:goal-1} is not quite standard since we did not require $Q_n$ to be exactly a distribution function,
so we understand $\int f \dd Q_n$ as $\int f \dd \bar Q_n$
with the measure $\bar Q_n$ on $\R$ defined by $\bar Q_n((u,v]):=Q_n(v+)-Q_n(u+)$
for any interval $(u,v]$ of this form in $\R$.
\begin{definition}\label{def:universal-consistency}
  A randomized predictive system $Q$ is \emph{universally consistent}
  if it is consistent for any probability measure $P$ on $\mathbf{Z}$.
\end{definition}
As already mentioned in Section~\ref{sec:introduction},
Definition~\ref{def:universal-consistency} is based on Belyaev's (see, e.g., \cite{Belyaev/Sjostedt:2000}).
Our goal is construction of universally consistent randomized predictive systems.

\section{Conformal predictive distributions}
\label{sec:CPD}

A way of producing randomized predictive distributions under the IID model has been proposed in \cite{\OCMXVII}.
This section reviews a basic version,
and Section~\ref{sec:UC-CPD} introduces a simple extension.

\begin{definition}\label{def:conformity-measure}
  A \emph{conformity measure} is a measurable function
  $A:\cup_{n=1}^{\infty}\mathbf{Z}^{n+1}\to\R$
  that is invariant with respect to permutations of the training observations:
  for any $n$, any sequence $(z_1,\ldots,z_{n})\in\mathbf{Z}^n$, any $z_{n+1}\in\mathbf{Z}$,
  and any permutation $\pi$ of $\{1,\ldots,n\}$,
  \begin{equation*}
    A(z_1,\ldots,z_n,z_{n+1})
    =
    A\left(z_{\pi(1)},\ldots,z_{\pi(n)},z_{n+1}\right).
  \end{equation*}
\end{definition}

The standard interpretation of a conformity measure $A$
is that the value $A(z_1,\ldots,z_n,z_{n+1})$ measures how well the new observation $z_{n+1}$
conforms to the comparison data $(z_1,\ldots,z_n)$.
In the context of this paper, and conformal predictive distributions in general,
$A(z_1,\ldots,z_n,z_{n+1})$, where $z_{n+1}=(x_{n+1},y_{n+1})$,
measures how large the response variable $y_{n+1}$ is,
in view of the corresponding predictor $x_{n+1}$ and comparison data $z_1,\ldots,z_n$.

\begin{definition}\label{def:conformal-transducer}
  The \emph{conformal transducer} corresponding to a conformity measure $A$ is defined as
  \begin{multline}\label{eq:conformal-Q}
    Q(z_1,\ldots,z_n,(x_{n+1},y),\tau)
    :=
    \frac{1}{n+1}
    \left|\left\{i=1,\ldots,n+1\mid\alpha^y_i<\alpha^y_{n+1}\right\}\right|\\
    +
    \frac{\tau}{n+1}
    \left|\left\{i=1,\ldots,n+1\mid\alpha^y_i=\alpha^y_{n+1}\right\}\right|,
  \end{multline}
  where $n\in\{1,2,\ldots\}$, $(z_1,\ldots,z_n)\in\mathbf{Z}^n$ is training data,
  $x_{n+1}\in\mathbf{X}$ is a test predictor,
  and for each $y\in\R$ the corresponding \emph{conformity scores} $\alpha_i^y$ are defined by
  \begin{equation}\label{eq:conformity-scores}
    \begin{aligned}
      \alpha_i^y
      &:=
      A(z_1,\ldots,z_{i-1},z_{i+1},\ldots,z_n,(x_{n+1},y),z_i),
        \qquad i=1,\ldots,n,\\
      \alpha^y_{n+1}
      &:=
      A(z_1,\ldots,z_n,(x_{n+1},y)).
    \end{aligned}
  \end{equation}
  A function is a \emph{conformal transducer} if it is the conformal transducer corresponding to some conformity measure.
\end{definition}

The usual interpretation of \eqref{eq:conformal-Q} is as a randomized p-value
obtained when testing the IID model for the training data
extended by adding the test predictor $x_{n+1}$ combined with a postulated response $y$
(cf.\ Remark~\ref{rem:R2-intuition} at the end of this section).

\begin{definition}
  A \emph{conformal predictive system}
  is a function that is both a conformal transducer and a randomized predictive system.
  If $Q$ is a conformal predictive system,
  $Q(z_1,\ldots,z_n,(x_{n+1},\cdot),\tau)$ are the corresponding \emph{conformal predictive distributions}
  (or, more fully, conformal predictive distribution functions).
\end{definition}

\begin{example}\label{ex:Dempster-Hill}
  The simplest non-trivial conformal predictive system is a version of the classical Dempster--Hill procedure
  (to use the terminology of \cite{\OCMXVII};
  Dempster \cite{Dempster:1963} referred to it as direct probabilities
  and Hill \cite{Hill:1968,Hill:1988} as Bayesian nonparametric predictive inference,
  which was abbreviated to nonparametric predictive inference by Coolen \cite{Augustin/Coolen:2004}).
  The conformity measure is
  \begin{equation}\label{eq:Dempster-Hill}
    A(z_1,\ldots,z_n,(x_{n+1},y_{n+1}))
   :=
   y_{n+1},
  \end{equation}
  so that it ignores the predictors.
  Since the predictors are ignored, we will write $y_i$ in place of $z_i=(x_i,y_i)\in\mathbf{Z}$,
  omitting the predictors from our notation.
  Now suppose we are given training data $y_1,\ldots,y_n$
  and are interested in the conformal predictive distribution
  for the next response $y_{n+1}$;
  for simplicity, we will assume that $y_1,\ldots,y_n$ are all different.
  The conformity scores \eqref{eq:conformity-scores} are $\alpha^y_i=y_i$ and $\alpha^y_{n+1}=y$,
  and so the conformal predictive distribution is
  \[
    Q(y_1,\ldots,y_n,y,\tau)
    =
    \begin{cases}
      (i+\tau)/(n+1) & \text{if $y\in(y_{(i)},y_{(i+1)})$, $i=0,\ldots,n$}\\
      (i-1+2\tau)/(n+1) & \text{if $y=y_{(i)}$, $i=1,\ldots,n$},
    \end{cases}
  \]
  where $y_{(1)},\ldots,y_{(n)}$ is the sequence $y_1,\ldots,y_n$ sorted in the increasing order,
  $y_{(0)}:=-\infty$, and $y_{(n+1)}:=\infty$.
  A more intuitive (and equally informative) representation can be given in terms of the intervals
  \[
    Q(y_1,\ldots,y_n,y)
    :=
    \left[
      Q(y_1,\ldots,y_n,y,0),
      Q(y_1,\ldots,y_n,y,1)
    \right];
  \]
  namely,
  \begin{multline}\label{eq:Q-DH}
    Q(y_1,\ldots,y_n,y)
    =\\
    \begin{cases}
      [i/(n+1),(i+1)/(n+1)] & \text{if $y\in(y_{(i)},y_{(i+1)})$, $i=0,\ldots,n$}\\
      [(i-1)/(n+1),(i+1)/(n+1)] & \text{if $y=y_{(i)}$, $i=1,\ldots,n$}.
    \end{cases}
  \end{multline}
  For a further discussion of the Dempster--Hill procedure
  in the context of conformal prediction, see \cite{\OCMXVII}.
  For another example of a conformal predictive system
  (depending on the predictors in a simple but non-trivial way),
  see Example~\ref{ex:1NN}.
\end{example}

\begin{remark}\label{rem:R2}
  Requirement R2 in the previous section is sometimes referred to
  as the frequentist validity of predictive or confidence distributions
  (see, e.g., \cite{Xie/Singh:2013} and \cite{Shen/etal:2017}).
  It can be argued that there is no need to appeal to frequencies in these and similar cases
  (see, e.g., \cite{\GTPL}).
  However, the property of validity enjoyed by conformal predictive systems is truly frequentist:
  for them R2 (see \eqref{eq:R2}) can be strengthened to say that the random numbers
  $Q(z_1,\ldots,z_n,z_{n+1},\tau_n)$, $n=1,2,\ldots$,
  are distributed uniformly in $[0,1]$ and independently,
  provided $z_n\sim P$ and $\tau_n\sim U$, $n=1,2,\ldots$, are all independent
  \cite[Theorem~8.1]{Vovk/etal:2005book}.
  In combination with the law of large numbers this implies, e.g.,
  that for $\epsilon\in(0,1)$ the frequency of the event
  \[
    Q(z_1,\ldots,z_n,z_{n+1},\tau_n)
    \in
    \left[
      \frac{\epsilon}{2},
      1-\frac{\epsilon}{2}
    \right]
  \]
  (i.e., the frequency of the central $(1-\epsilon)$-prediction interval covering the true response)
  converges to $1-\epsilon$ as $n\to\infty$.
  Notice that this frequentist conclusion depends on the independence of $Q(z_1,\ldots,z_n,z_{n+1},\tau_n)$
  for different $n$;
  R2 alone is not sufficient.
\end{remark}

For a natural class of conformity measures the corresponding conformal transducers
are automatically conformal predictive systems.

\begin{definition}\label{def:monotonic}
  A conformity measure $A$ is \emph{monotonic} if $A(z_1,\ldots,z_{n+1})$ is:
  \begin{itemize}
  \item
    monotonically increasing in $y_{n+1}$,
    \begin{multline*}
      y_{n+1} \le y'_{n+1}
      \Longrightarrow
      A(z_1,\ldots,z_n,(x_{n+1},y_{n+1}))\\
      \le
      A(z_1,\ldots,z_n,(x_{n+1},y'_{n+1}));
    \end{multline*}
  \item
    monotonically decreasing in $y_{1}$,
    \begin{equation*}
      y_{1} \le y'_{1}
      \Longrightarrow
      A((x_1,y_1),z_{2},\ldots,z_n,z_{n+1})
      \ge
      A((x_1,y'_1),z_{2},\ldots,z_n,z_{n+1})
    \end{equation*}
    (which is equivalent to being decreasing in $y_i$ for any $i=2,\ldots,n$).
  \end{itemize}
\end{definition}

Let $A_n$ be the restriction of $A$ to $\mathbf{Z}^{n+1}$.

\begin{lemma}\label{lem:monotonic}
  Suppose a monotonic conformity measure $A$ satisfies the following three conditions:
  \begin{itemize}
  \item
    for all $n$, all training data sequences $(z_1,\ldots,z_n)$, and all test predictors $x_{n+1}$,
    \begin{align}
      \inf_{y}
      A(z_1,\ldots,z_n,(x_{n+1},y))
      &=
      \inf A_n,
      \label{eq:additional-inf}\\
      \sup_{y}
      A(z_1,\ldots,z_n,(x_{n+1},y))
      &=
      \sup A_n;
      \label{eq:additional-sup}
    \end{align}
  \item
    for each $n$,
    the $\inf_{y}$ in \eqref{eq:additional-inf} is either attained for all $(z_1,\ldots,z_n)$ and $x_{n+1}$
    or not attained for all $(z_1,\ldots,z_n)$ and $x_{n+1}$;
  \item
    for each $n$,
    the $\sup_{y}$ in \eqref{eq:additional-sup} is either attained for all $(z_1,\ldots,z_n)$ and $x_{n+1}$
    or not attained for all $(z_1,\ldots,z_n)$ and $x_{n+1}$.
  \end{itemize}
  Then the conformal transducer corresponding to $A$ is a randomized predictive system.
\end{lemma}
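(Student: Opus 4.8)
The plan is to verify that the conformal transducer $Q$ corresponding to $A$ satisfies requirements R1 and R2 of Definition~\ref{def:RPS}; requirement R2 holds for \emph{every} conformal transducer (this is essentially the standard validity of conformal prediction, and is asserted implicitly in the discussion following Definition~\ref{def:conformal-transducer}, being \cite[Theorem~8.1]{Vovk/etal:2005book}), so the whole content of the lemma is the verification of R1. I would isolate the effect of the monotonicity hypotheses on the conformity scores $\alpha_i^y$ of \eqref{eq:conformity-scores}: by the first clause of Definition~\ref{def:monotonic}, $\alpha_{n+1}^y = A(z_1,\ldots,z_n,(x_{n+1},y))$ is monotonically increasing in $y$; by the second clause, each $\alpha_i^y = A(z_1,\ldots,z_{i-1},z_{i+1},\ldots,z_n,(x_{n+1},y),z_i)$ for $i \le n$ has $(x_{n+1},y)$ sitting in a ``comparison'' slot (not the last slot), so $\alpha_i^y$ is monotonically \emph{decreasing} in $y$. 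Thus, as $y$ increases, $\alpha_{n+1}^y$ moves up while every other $\alpha_i^y$ moves down.

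For R1(i), monotonicity in $\tau$ is immediate from \eqref{eq:conformal-Q} since the coefficient of $\tau$ is a nonnegative count. For monotonicity in $y$, I would argue that as $y$ increases the term $\frac{1}{n+1}\lvert\{i : \alpha_i^y < \alpha_{n+1}^y\}\rvert$ cannot decrease: once $\alpha_i^y < \alpha_{n+1}^y$ for some $i \le n$, this strict inequality persists for all larger $y$ because the left side is nonincreasing and the right side nondecreasing in $y$; and for $i = n+1$ the comparison $\alpha_{n+1}^y < \alpha_{n+1}^y$ is always false, so $i = n+1$ never contributes. The delicate point is the behaviour at $\tau \in (0,1)$, where ties matter: when $y$ crosses a value at which some $\alpha_i^y$ ($i \le n$) equals $\alpha_{n+1}^y$, that index migrates from the ``$=$'' bucket to the ``$<$'' bucket, and one must check that the change in $Q$, namely $\frac{1}{n+1}(1 - \tau) \ge 0$ per such index, is a net increase; more generally one checks the total $Q$ is monotone by summing these per-index contributions. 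This step — handling ties and the $\tau$-weighting simultaneously — is the main obstacle, though it is a finite bookkeeping argument rather than a deep one.

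For R1(ii), I would use the three extra conditions in the statement. As $y \to \infty$, hypothesis \eqref{eq:additional-sup} gives $\alpha_{n+1}^y \uparrow \sup A_n$, while every $\alpha_i^y$ for $i \le n$ (having $(x_{n+1},y)$ in a comparison slot) tends \emph{down}, hence is eventually $\le \alpha_{n+1}^y$; combined with the ``attained / not attained'' dichotomy one gets that for large $y$ either all $\alpha_i^y < \alpha_{n+1}^y$ (if the sup is not attained) or all $\alpha_i^y \le \alpha_{n+1}^y$ with equality only possible when the sup is attained, and in either case $Q(z_1,\ldots,z_n,(x_{n+1},y),1) \to 1$. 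Symmetrically, as $y \to -\infty$, $\alpha_{n+1}^y \downarrow \inf A_n$, every other score tends up, and using \eqref{eq:additional-inf} with the dichotomy for the infimum one gets $Q(z_1,\ldots,z_n,(x_{n+1},y),0) \to 0$; here the $\tau = 0$ is essential, since with $\tau = 0$ any ties with $\alpha_{n+1}^y$ contribute nothing. The uniformity of the ``attained / not attained'' status across all $(z_1,\ldots,z_n)$ and $x_{n+1}$ is what lets these limits be computed without case-splitting on the data, so I would flag exactly where that uniformity enters. Finally I would remark that R2 requires no monotonicity at all and quote the conformal-prediction validity result, completing the proof.
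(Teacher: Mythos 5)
Your proposal is correct and follows essentially the same route as the paper: R2 is dispatched as the standard validity property of conformal transducers, R1(i) is established by observing that $\alpha_{n+1}^y$ is increasing and $\alpha_i^y$ ($i\le n$) is decreasing in $y$ and then treating each index's contribution (the paper writes \eqref{eq:conformal-Q} as the sum \eqref{eq:as-sum} and notes each addend runs through $0,\tau,1$ in that order, which is the clean version of your ``bucket migration''), and R1(ii) is proved via the same dichotomy on whether the extremum is attained, with $\tau=0$ (resp.\ $\tau=1$) killing (resp.\ saturating) the tie term. The one spot worth tightening is the ``eventually $\le$'' step in your R1(ii): in the not-attained case you need to note, as the paper does, that the hypothesis forces each fixed $\alpha_i^0$ to be strictly inside the range of $A_n$, so that $\alpha_{n+1}^y$ eventually strictly crosses it.
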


As usual, the two $\inf$ in \eqref{eq:additional-inf} are allowed to take value $-\infty$,
and the two $\sup$ in \eqref{eq:additional-sup} are allowed to take value $\infty$.
The conditions of Lemma~\ref{lem:monotonic} will be satisfied
if \eqref{eq:additional-inf} and \eqref{eq:additional-sup} hold
with $\inf A_n$ and $\sup A_n$ replaced by $-\infty$ and $\infty$, respectively;
we will usually use this simplified version of the lemma
(except for the proof of our main result,
where we will need a $[0,1]$-valued conformity measure).
Before proving Lemma~\ref{lem:monotonic},
we will give a less trivial example of a conformal predictive system
(cf.\ Example~\ref{ex:Dempster-Hill}).

\begin{example}\label{ex:1NN}
  In this example we will modify the conformity measure~\eqref{eq:Dempster-Hill}
  of the Dempster--Hill procedure by making it dependent, in a very simple way,
  on the predictors;
  it will satisfy all conditions of Lemma~\ref{lem:monotonic}
  (with $-\infty$ and $\infty$ on the right-hand sides of \eqref{eq:additional-inf} and \eqref{eq:additional-sup},
  respectively).
  Namely, we set
  \begin{equation}\label{eq:1NN}
    A
    \bigl(
      (x_1,y_1),\ldots,(x_n,y_n),(x_{n+1},y_{n+1})
    \bigr)
   :=
   y_{n+1} - \hat y_{n+1},
  \end{equation}
  where $\hat y_{n+1}$ is the response $y_i$ corresponding to the \emph{nearest neighbour} $x_i$ of $x_{n+1}$:
  $i\in\arg\min_{k\in\{1,\ldots,n\}}\rho(x_k,x_{n+1})$,
  $\rho$ being a measurable metric on the predictor space $\mathbf{X}$.
  In this example we only consider the case where the pairwise distances $\rho(x_i,x_j)$,
  $i,j\in\{1,\ldots,n+1\}$, are all different;
  the definition will be completed in Example~\ref{ex:1NN-r}.
  For each $i\in\{1,\ldots,n\}$, let $\hat y_i$ be the response $y_j$
  corresponding to the nearest neighbour $x_j$ to $x_i$ among $x_1,\ldots,x_n$:
  $j\in\arg\min_{k\in\{1,\ldots,n\}\setminus\{i\}}\rho(x_k,x_i)$.
  Let $I\subseteq\{1,\ldots,n\}$ be the set of $i\in\{1,\ldots,n\}$ such that $x_i$
  is closer to $x_{n+1}$ than to any of $x_j$, $j\in\{1,\ldots,n\}\setminus\{i\}$.
  The conformity scores \eqref{eq:conformity-scores} are $\alpha^y_{n+1}=y-\hat y_{n+1}$,
  where $\hat y_{n+1}$ is defined as in \eqref{eq:1NN},
  $\alpha^y_i=y_i-y$ if $i\in I$, and $\alpha^y_i=y_i-\hat y_i$ if $i\in \{1,\ldots,n\}\setminus I$.
  Solving the equation $\alpha^y_i=\alpha^y_{n+1}$ (cf.\ \eqref{eq:conformal-Q})
  gives $y=C_i:=(\hat y_{n+1}+y_i)/2$ if $i\in I$
  and
  \begin{equation}\label{eq:outside-I}
    y=C_i:=\hat y_{n+1}+(y_i-\hat y_i)
  \end{equation}
  if $i\in\{1,\ldots,n\}\setminus I$.
  Assuming, for simplicity, that $C_1,\ldots,C_n$ are all different,
  we obtain the conformal predictive distribution
  \begin{multline}\label{eq:Q-1NN}
    Q(y_1,\ldots,y_n,y)
    =\\
    \begin{cases}
      [i/(n+1),(i+1)/(n+1)] & \text{if $y\in(C_{(i)},C_{(i+1)})$, $i=0,\ldots,n$}\\
      [(i-1)/(n+1),(i+1)/(n+1)] & \text{if $y=C_{(i)}$, $i=1,\ldots,n$}
    \end{cases}
  \end{multline}
  (cf.\ \eqref{eq:Q-DH}),
  where $C_{(1)},\ldots,C_{(n)}$ is the sequence $C_1,\ldots,C_n$ sorted in the increasing order,
  $C_{(0)}:=-\infty$, and $C_{(n+1)}:=\infty$.

  The naive nearest-neighbour modification of the Dempster--Hill predictive distribution \eqref{eq:Q-DH}
  would be \eqref{eq:Q-1NN} with all $C_i$ defined by \eqref{eq:outside-I}.
  The conformal predictive distribution is different only for $i\in I$,
  and $I$ is typically a small set (its expected size is 1).
  For such $i$ the conformal predictive distribution modifies the residual
  $y_i-\hat y_i$ in \eqref{eq:outside-I} by replacing it by $(y_i-\hat y_{n+1})/2$.
  Intuitively, the nearest neighbour to $x_i$ in the augmented set $\{x_1,\ldots,x_{n+1}\}$ is $x_{n+1}$,
  so we would like to use $y_{n+1}$ instead of $\hat y_i$;
  but since we do not know $y_{n+1}$ as yet, we have to settle for its estimate $\hat y_{n+1}$,
  and the resulting loss of accuracy is counterbalanced by halving the new residual.
  This seemingly minor further modification
  ensures the small-sample property of validity R2.
\end{example}

\begin{example}\label{ex:LSPM}
  Another natural conformity measure is \eqref{eq:1NN}
  with $\hat y_{n+1}$ being the Least Squares prediction of $y_{n+1}$
  computed for the predictor $x_{n+1}$ given $z_1,\ldots,z_n$
  as training data;
  this makes $y_{n+1}-\hat y_{n+1}$ the deleted residual for $y_{n+1}$.
  Alternative definitions use ordinary residuals
  (where $(x_{n+1},y_{n+1})$ is added to the training data)
  and studentized residuals
  (which are half-way between deleted and ordinary residuals,
  in a certain sense).
  These conformity measures give rise to what is called Least Squares Prediction Machines
  in \cite{\OCMXVII}.
  Only the studentized version is a randomized predictive system;
  the other two versions satisfy property R1(i) only under the assumption
  of the absence of high-leverage points.
  See \cite{\OCMXVII} for an in-depth study of properties
  of Least Squares Prediction Machines,
  especially of their asymptotic efficiency under a standard Gaussian linear model.
\end{example}

\begin{remark}\label{rem:difference}
  The degree to which a randomized predictive system is affected by randomness,
  for given training data $(z_1,\ldots,z_n)$, test predictor $x_{n+1}$, and postulated response $y$,
  is \eqref{eq:difference}.
  As already mentioned, in interesting cases this difference will be small.
  For example, for the Dempster--Hill predictive system (Example~\ref{ex:Dempster-Hill}),
  the nearest neighbour predictive system (Example~\ref{ex:1NN}),
  and Least Squares Prediction Machines (Example~\ref{ex:LSPM}),
  the difference \eqref{eq:difference} is $1/(n+1)$ except for at most $n$ values of $y$,
  apart from pathological cases
  (see, e.g., \eqref{eq:Q-DH} and \eqref{eq:Q-1NN}).
  A randomized predictive system can be universally consistent
  only if the difference \eqref{eq:difference} is small with high probability.
\end{remark}

\begin{proof}[Proof of Lemma~\ref{lem:monotonic}]
  We need to check requirements R1 and R2.
  R2 is the standard property of validity for conformal transducers
  (see, e.g., \cite{Vovk/etal:2005book}, Theorem~8.1).
  The intuition behind the proof of this property is given in Remark~\ref{rem:R2-intuition}
  at the end of this section.

  The second statement of R1(i) is that \eqref{eq:conformal-Q} is monotonically increasing in $\tau$;
  this follows from \eqref{eq:conformal-Q} being a linear function of $\tau$
  with a nonnegative slope
  (the slope is in fact always positive as $i=n+1$ is allowed).

  The first statement of R1(i) is that \eqref{eq:conformal-Q} is monotonically increasing in $y$.
  We can rewrite \eqref{eq:conformal-Q} as
  \begin{equation}\label{eq:as-sum}
    Q(z_1,\ldots,z_n,(x_{n+1},y),\tau)
    =
    \frac{1}{n+1}
    \sum_{i=1}^{n+1}
    \left(
      1_{\{\alpha^y_i<\alpha^y_{n+1}\}}
      +
      \tau
      1_{\{\alpha^y_i=\alpha^y_{n+1}\}}
    \right),
  \end{equation}
  where $1_{\{E\}}$ stands for the indicator function of a property $E$,
  and it suffices to prove that each addend in~\eqref{eq:as-sum} is monotonically increasing in $y$;
  we will assume $i\le n$ (the case $i=n+1$ is trivial).
  This follows from $\alpha_i^y$ being monotonically decreasing in $y$
  and $\alpha_{n+1}^y$ being monotonically increasing in $y$,
  and therefore,
  \[
    1_{\{\alpha^y_i<\alpha^y_{n+1}\}}
    +
    \tau
    1_{\{\alpha^y_i=\alpha^y_{n+1}\}}
  \]
  taking all or some of the values $0$, $\tau$, $1$ in this order as $y$ increases.

  For concreteness, we will prove only the first statement of R1(ii), \eqref{eq:R1-first}.
  Fix an $n$.
  First let us assume that the $\inf_{y}$ in \eqref{eq:additional-inf} is attained for all $(z_1,\ldots,z_n)$ and $x_{n+1}$.
  We will have $\alpha^y_{n+1}=\inf A_n$ for sufficiently small $y$,
  and plugging $\tau:=0$ into \eqref{eq:conformal-Q} will give 0, as required.
  It remains to consider the case where the $\inf_{y}$ in \eqref{eq:additional-inf} is not attained
  for any $(z_1,\ldots,z_n)$ and $x_{n+1}$.
  Since $\min_{i=1,\ldots,n}\alpha^0_i>\inf A$,
  for sufficiently small $y$ we will have
  \[
    \alpha^y_{n+1}
    <
    \min_{i=1,\ldots,n}\alpha^0_i
    \le
    \min_{i=1,\ldots,n}\alpha^y_i,
  \]
  and so plugging $\tau:=0$ into \eqref{eq:conformal-Q} will again give 0.
\end{proof}

\begin{remark}\label{rem:R2-intuition}
  The proof of Lemma~\ref{lem:monotonic} refers to \cite{Vovk/etal:2005book}
  for a complete proof of R2.
  However, the intuition behind the proof is easy to explain.
  Setting $\tau:=1$ and assuming that there are no ties among the conformity scores,
  the right-hand side of \eqref{eq:conformal-Q} evaluated at $y:=y_{n+1}$
  is the rank of the last observation $(x_{n+1},y_{n+1})$ in the \emph{augmented training data}
  $(z_1,\ldots,z_n,(x_{n+1},y_{n+1}))$.
  Under the IID model (and the weaker assumption of the exchangeability
  of all the $n+1$ observations),
  the rank is uniformly distributed in the set $\{1,\ldots,n+1\}$.
  Dividing by $n+1$ and making $\tau\sim U$ leads to \eqref{eq:conformal-Q}
  (evaluated at $y:=y_{n+1}$)
  being uniformly distributed in $[0,1]$
  (even if some conformity scores are tied).
  This makes \eqref{eq:conformal-Q} a \emph{bona fide} randomized p-value
  for testing the IID model.
\end{remark}

\section{Mondrian predictive distributions}
\label{sec:MPD}

First we simplify our task by allowing Mondrian predictive distributions,
which are more general than conformal predictive distributions but enjoy the same property of validity R2.

\begin{definition}\label{def:taxonomy}
  A \emph{taxonomy} $\kappa$ is an equivariant measurable function
  that assigns to each sequence $(z_1,\ldots,z_n,z_{n+1})\in\mathbf{Z}^{n+1}$,
  for each $n\in\{1,2,\ldots\}$,
  an equivalence relation $\sim$ on $\{1,\ldots,n+1\}$.
\end{definition}

The requirement that $\kappa$ be equivariant will be spelled out in Definition~\ref{def:equivariant}.
The idea behind a taxonomy is to determine the comparison class for computing
the p-value \eqref{eq:conformal-Q};
instead of using all available data we will only use the observations
that are equivalent to the test observation
(intuitively, similar to it in some respect, with the aim of making the p-value more relevant).

The notation $(i\sim j \mid z_1,\ldots,z_{n+1})$, where $i,j\in\{1,\ldots,n+1\}$, means that $i$ is equivalent to $j$
under the equivalence relation assigned by $\kappa$ to $(z_1,\ldots,z_{n+1})$
(where $\kappa$ is always clear from the context and not reflected in our notation).
The measurability of $\kappa$ means that, for all $n$, $i$, and $j$, the set
$\{(z_1,\ldots,z_{n+1})\mid(i\sim j\mid z_1,\ldots,z_{n+1})\}$
is measurable.

\begin{definition}\label{def:equivariant}
  A permutation $\pi$ of $\{1,\ldots,n+1\}$ \emph{respects} an equivalence relation $\sim$
  if $\pi(i)\sim i$ for all $i=1,\ldots,n+1$.
  The requirement that a Mondrian taxonomy $\kappa$ be \emph{equivariant} means that,
  for each $n$, each $(z_1,\ldots,z_{n+1})\in\mathbf{Z}^{n+1}$,
  and each permutation $\pi$ of $\{1,\ldots,n+1\}$ respecting the equivalence relation
  assigned by $\kappa$ to $(z_1,\ldots,z_{n+1})$,
  we have
  \begin{equation}\label{eq:equivariance}
    (i\sim j \mid z_1,\ldots,z_{n+1})
    \Longrightarrow
    (\pi(i)\sim\pi(j)\mid z_{\pi(1)},\ldots,z_{\pi(n+1)}).
  \end{equation}
\end{definition}

\begin{remark}\label{rem:taxonomy}
  The notion of taxonomy used in this paper is introduced in \cite{\OCMVII} under the name of Venn taxonomies
  and subsumes Mondrian taxonomies as defined in \cite[Section 4.5]{Vovk/etal:2005book},
  Venn taxonomies as defined in \cite[Section 6.3]{Vovk/etal:2005book},
  and $n$-taxonomies as defined in \cite[Section 2.2]{Bala/etal:2014}.
  A narrower notion of taxonomy requires that \eqref{eq:equivariance}
  hold for all permutations $\pi$ of $\{1,\ldots,n+1\}$;
  the taxonomy of Section~\ref{sec:HMPD} belongs to this narrower class.
\end{remark}

\begin{definition}\label{def:Mondrian}
  Define
  \begin{equation*}
    \kappa(j \mid z_1,\ldots,z_{n+1})
    :=
    \left\{
      i\in\{1,\ldots,n+1\}
      \mid
      (i\sim j \mid z_1,\ldots,z_{n+1})
    \right\}
  \end{equation*}
  to be the equivalence class of $j$.
  The \emph{Mondrian transducer} corresponding to a taxonomy $\kappa$ and a conformity measure $A$ is
  \begin{multline}\label{eq:Mondrian-Q}
    Q(z_1,\ldots,z_n,(x_{n+1},y),\tau)\\
    :=
    \frac
    {
      \left|\left\{i\in\kappa(n+1\mid z_1,\ldots,z_n,(x_{n+1},y)) \mid
      \alpha^y_i<\alpha^y_{n+1}\right\}\right|
    }
    {
      \left|\kappa(n+1\mid z_1,\ldots,z_n,(x_{n+1},y))\right|
    }\\
    +
    \tau
    \frac
    {
      \left|\left\{i\in\kappa(n+1\mid z_1,\ldots,z_n,(x_{n+1},y)) \mid
      \alpha^y_i=\alpha^y_{n+1}\right\}\right|
    }
    {
      \left|\kappa(n+1\mid z_1,\ldots,z_n,(x_{n+1},y))\right|
    },
  \end{multline}
  where $n\in\{1,2,\ldots\}$, $(z_1,\ldots,z_n)\in\mathbf{Z}^n$ is training data,
  $x_{n+1}\in\mathbf{X}$ is a test predictor,
  and for each $y\in\mathbf{Y}$ the corresponding conformity scores $\alpha_i^y$ and $\alpha^y_{n+1}$
  are still defined by \eqref{eq:conformity-scores}.
  A function is a \emph{Mondrian transducer} if it is the Mondrian transducer
  corresponding to some taxonomy and conformity measure.
  A \emph{Mondrian predictive system} is a function that is both a Mondrian transducer
  and a randomized predictive system, as defined in Section~\ref{sec:RPD}.
\end{definition}

Notice that the denominator in~\eqref{eq:Mondrian-Q} is always positive.
The Mondrian p-value \eqref{eq:Mondrian-Q} differs from the original p-value \eqref{eq:conformal-Q}
in that it uses only the equivalence class of the test observation
(with a postulated response) as comparison class.
See \cite[Fig.~4.3]{Vovk/etal:2005book} for the origin of the attribute ``Mondrian''.

\begin{lemma}\label{lem:Mondrian}
  If a taxonomy does not depend on the responses
  and a conformity measure is monotonic and satisfies the three conditions of Lemma~\ref{lem:monotonic},
  the corresponding Mondrian transducer will be a randomized (and, therefore, Mondrian) predictive system.
\end{lemma}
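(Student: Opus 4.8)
The plan is to mimic the proof of Lemma~\ref{lem:monotonic}, checking requirements R1 and R2 for the Mondrian transducer \eqref{eq:Mondrian-Q}, while exploiting the extra assumption that the taxonomy $\kappa$ does not depend on the responses. The key simplification coming from that assumption is this: when we vary the postulated response $y$ of the test observation, the equivalence class $\kappa(n+1\mid z_1,\ldots,z_n,(x_{n+1},y))$ does not change, so throughout the argument we may treat the comparison class $K:=\kappa(n+1\mid z_1,\ldots,z_n,(x_{n+1},y))$ and its cardinality $|K|$ as fixed quantities depending only on $(z_1,\ldots,z_n,x_{n+1})$. I would first record this observation explicitly, since it is what makes the monotonicity arguments go through verbatim.

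For R1(i), I would rewrite \eqref{eq:Mondrian-Q} in the ``sum'' form analogous to \eqref{eq:as-sum}, namely
\[
  Q(z_1,\ldots,z_n,(x_{n+1},y),\tau)
  =
  \frac{1}{|K|}
  \sum_{i\in K}
  \left(
    1_{\{\alpha^y_i<\alpha^y_{n+1}\}}
    +
    \tau
    1_{\{\alpha^y_i=\alpha^y_{n+1}\}}
  \right),
\]
which is legitimate precisely because $K$ does not move with $y$. Monotonicity in $\tau$ is then immediate (nonnegative slope, with $n+1\in K$ always, so the slope is positive). Monotonicity in $y$ follows addend by addend exactly as in Lemma~\ref{lem:monotonic}: for $i=n+1$ the addend is constant, and for $i\ne n+1$ monotonicity of $A$ makes $\alpha^y_i$ decreasing and $\alpha^y_{n+1}$ increasing in $y$, so each addend passes through $0,\tau,1$ in that order. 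For R1(ii) I would again follow the case analysis of Lemma~\ref{lem:monotonic}: using conditions \eqref{eq:additional-inf}, \eqref{eq:additional-sup} and the attainedness dichotomies, for $y$ small enough $\alpha^y_{n+1}$ is either equal to, or strictly below, $\min_{i\in K\setminus\{n+1\}}\alpha^y_i$, so plugging $\tau:=0$ gives $0$; symmetrically for the limit at $+\infty$.

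Requirement R2 is the validity of Mondrian conformal transducers, and I would cite \cite[Section~4.5]{Vovk/etal:2005book} (Mondrian conformal prediction) for it, remarking that the intuition is as in Remark~\ref{rem:R2-intuition}: conditionally on the partition of the augmented data $(z_1,\ldots,z_n,(x_{n+1},y_{n+1}))$ into equivalence classes, and by exchangeability within each class (which is what equivariance of $\kappa$, Definition~\ref{def:equivariant}, delivers), the rank of the test observation inside its own class $K$ is uniform on $\{1,\ldots,|K|\}$, and randomizing by $\tau\sim U$ smooths this to a uniform variable on $[0,1]$ even in the presence of ties. The main obstacle, and the only place where genuine care beyond Lemma~\ref{lem:monotonic} is needed, is keeping straight that the class $K$ used at $y=y_{n+1}$ in the R2 argument is the \emph{same} random object as the class appearing in the denominator of \eqref{eq:Mondrian-Q} at that $y$ — this is exactly where ``the taxonomy does not depend on the responses'' is used a second time, and it is worth a sentence to make the reduction to the known validity theorem clean. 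Everything else is a routine transcription of the proof of Lemma~\ref{lem:monotonic} with $\{1,\ldots,n+1\}$ replaced by $K$ and $n+1$ replaced by $|K|$.
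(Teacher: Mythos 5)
Your proposal follows essentially the same route as the paper's proof: note that response-independence of the taxonomy freezes the equivalence class $K$ of $n+1$ as $y$ varies, rewrite \eqref{eq:Mondrian-Q} as a sum over $K$ in the style of \eqref{eq:as-sum}, repeat the addend-by-addend monotonicity argument of Lemma~\ref{lem:monotonic} for R1(i), run the same case analysis for R1(ii), and cite the standard Mondrian validity result for R2. The paper cites \cite[Section~8.7]{Vovk/etal:2005book} rather than Section~4.5 for R2, but that is inessential.

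One conceptual slip is worth flagging. You write that response-independence of the taxonomy is ``used a second time'' in the R2 argument, to ensure that the class appearing in the denominator of \eqref{eq:Mondrian-Q} at $y=y_{n+1}$ agrees with the class used in the validity argument. This is not needed: at $y=y_{n+1}$ the denominator of \eqref{eq:Mondrian-Q} is computed from the actual augmented data $(z_1,\ldots,z_n,(x_{n+1},y_{n+1}))$, so it is \emph{automatically} the class the validity theorem speaks about, regardless of whether $\kappa$ looks at responses. Indeed the paper explicitly says that ``the proof of R2 is standard and valid for any taxonomy.'' The response-independence hypothesis is used exactly once, for monotonicity in $y$ in R1(i); everything else in the lemma (including the appeal to Mondrian validity) goes through for arbitrary taxonomies. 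So your proof reaches the right conclusion, but the sentence about R2 slightly misattributes the role of the hypothesis and would be worth correcting so as not to suggest the lemma's conclusion about validity is narrower than it is.
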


\begin{proof}
  As in Lemma~\ref{lem:monotonic},
  the conformity scores (defined by \eqref{eq:conformity-scores})
  $\alpha^y_i$ are monotonically increasing in $y$ when $i=n+1$
  and monotonically decreasing in $y$ when $i=1,\ldots,n$.
  Since the equivalence class of $n+1$ in \eqref{eq:Mondrian-Q} does not depend on $y$,
  the value of \eqref{eq:Mondrian-Q} is monotonically increasing in $y$:
  it suffices to replace \eqref{eq:as-sum} by
  \begin{multline*}
    Q(z_1,\ldots,z_n,(x_{n+1},y),\tau)
    =
    \frac{1}{\left|\kappa(n+1\mid z_1,\ldots,z_n,(x_{n+1},y))\right|}\\
    \sum_{i\in\kappa(n+1\mid z_1,\ldots,z_n,(x_{n+1},y))}
    \left(
      1_{\{\alpha^y_i<\alpha^y_{n+1}\}}
      +
      \tau
      1_{\{\alpha^y_i=\alpha^y_{n+1}\}}
    \right)
  \end{multline*}
  in the argument of Lemma~\ref{lem:monotonic}.
  In combination with the obvious monotonicity in $\tau$, this proves R1(i).
  R1(ii) is demonstrated as in Lemma~\ref{lem:monotonic}.
  The proof of R2 is standard and valid for any taxonomy
  (see, e.g., \cite[Section 8.7]{Vovk/etal:2005book});
  the intuition behind it is given in Remark~\ref{rem:R2-intuition-Mondrian} below.
\end{proof}

The properties listed in Lemma~\ref{lem:Mondrian}
will be satisfied by the conformity measure and taxonomy
defined in Section~\ref{sec:HMPD} to prove Theorem~\ref{thm:MPS}, a weaker form of the main result of this paper.

\begin{remark}\label{rem:R2-intuition-Mondrian}
  Remark~\ref{rem:R2-intuition} can be easily adapted to Mondrian predictive systems.
  For $\tau:=1$ and assuming no ties among the conformity scores,
  the right-hand side of \eqref{eq:Mondrian-Q} at $y:=y_{n+1}$
  is the rank of the last observation $(x_{n+1},y_{n+1})$ in its equivalence class
  divided by the size of the equivalence class.
  Let us introduce another notion of equivalence:
  sequences $(z_1,\ldots,z_{n+1})$ and $(z'_1,\ldots,z'_{n+1})$ in $\mathbf{Z}^{n+1}$
  are \emph{equivalent} if
  \[
    (z'_1,\ldots,z'_{n+1})
    =
    \left(
      z_{\pi(1)},\ldots,z_{\pi(n+1)}
    \right)
  \]
  for some permutation $\pi$ of $\{1,\ldots,n+1\}$ that respects the equivalence relation
  assigned by $\kappa$ to $(z_1,\ldots,z_{n+1})$;
  this is indeed an equivalence relation since $\kappa$ is equivariant.
  The stochastic mechanism generating the augmented training data (the IID model)
  can be represented as generating an equivalence class (which is always finite)
  and then generating the actual sequence of observations in $\mathbf{Z}^{n+1}$
  from the uniform probability distribution on the equivalence class.
  Already the second step ensures that the rank is distributed uniformly
  in the set of its possible values,
  which leads to \eqref{eq:Mondrian-Q} being uniformly distributed in $[0,1]$,
  provided $y:=y_{n+1}$ and $\tau\sim U$.
\end{remark}

\begin{remark}\label{rem:advantage}
  One advantage of conformal predictive systems over Mondrian predictive systems
  is that the former satisfy a stronger version of R2, as explained in Remark~\ref{rem:R2}.
\end{remark}

\section{Universally consistent Mondrian predictive systems
  and probability forecasting systems}
\label{sec:UC-MPD}

Our results (Theorems~\ref{thm:MPS}--\ref{thm:CPS}) will assume that the predictor space $\mathbf{X}$ is standard Borel
(see, e.g., \cite[Definition 12.5]{Kechris:1995});
the class of standard Borel spaces is very wide and contains, e.g., all Euclidean spaces $\R^d$.
In this section we start from an easy result (Theorem~\ref{thm:MPS})
and its adaptation to deterministic forecasting (Theorem~\ref{thm:PFS}).

\begin{theorem}\label{thm:MPS}
  If the predictor space $\mathbf{X}$ is standard Borel,
  there exists a universally consistent Mondrian predictive system.
\end{theorem}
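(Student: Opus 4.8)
The plan is to build a Mondrian predictive system explicitly by choosing a suitable taxonomy $\kappa$ together with the identity-type conformity measure $A(z_1,\dots,z_n,(x_{n+1},y)):=y$, so that Lemma~\ref{lem:Mondrian} applies directly. For $A(z_1,\dots,z_n,(x_{n+1},y))=y$ the conformity measure is monotonic and satisfies the three conditions of Lemma~\ref{lem:monotonic} (with $-\infty,\infty$ on the right-hand sides of \eqref{eq:additional-inf} and \eqref{eq:additional-sup}), so whatever response-independent taxonomy we choose, the resulting Mondrian transducer is automatically a randomized predictive system and hence inherits the small-sample validity R2. The whole burden therefore falls on arranging that the taxonomy produces equivalence classes that localize around the test predictor $x_{n+1}$ finely enough for consistency, yet grow large enough that the empirical distribution of responses within the test observation's cell converges to the true conditional distribution $\Expect_P(\cdot\mid x_{n+1})$.

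The first concrete step is to construct the taxonomy. Since $\mathbf{X}$ is standard Borel, it is Borel-isomorphic either to a countable discrete space or to $[0,1]$ (or $\R$); in the latter case we may transport everything to $\R$ and use a fixed sequence of nested partitions of $\R$ into half-open intervals whose mesh tends to $0$. The idea, following the classical histogram-estimator arguments in \cite{Gyorfi/etal:2002,Devroye/etal:1996,Stone:1977}, is: given the augmented sample of size $n+1$, pick a partition level $k=k(n)\to\infty$ slowly enough that each occupied cell contains, with high probability, many of the $n+1$ predictors; then declare $i\sim j$ iff $x_i$ and $x_j$ lie in the same cell of the level-$k(n)$ partition. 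This depends only on predictors, so Lemma~\ref{lem:Mondrian} applies. (To make $k$ depend only on $n$, not on the data, and to keep the taxonomy equivariant in the strong sense of Remark~\ref{rem:taxonomy}, one fixes the partition sequence in advance.) One must check the measurability and equivariance conditions of Definitions~\ref{def:taxonomy}–\ref{def:equivariant}, but these are routine once the partition is a fixed Borel object.

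The second step is the consistency verification. Fix $P$ and a bounded continuous $f:\R\to\R$. Write $C_n$ for the level-$k(n)$ cell containing $x_{n+1}$. The Mondrian predictive distribution $Q_n$ is, up to the $O(1/|C_n|)$ randomization slack, the empirical distribution of $\{y_i : x_i\in C_n,\ i\le n\}$ together with a contribution from the test point; so $\int f\,\dd Q_n$ is close to $\frac{1}{N_n}\sum_{i\le n,\,x_i\in C_n} f(y_i)$, where $N_n=|C_n\cap\{x_1,\dots,x_n\}|$. We then split the error as
\begin{equation*}
  \left(\frac{1}{N_n}\sum_{x_i\in C_n} f(y_i) - \Expect_P\!\bigl(f(y)\mid x\in C_n\bigr)\right)
  + \left(\Expect_P\!\bigl(f(y)\mid x\in C_n\bigr) - \Expect_P(f\mid x_{n+1})\right).
\end{equation*}
The first bracket goes to $0$ in probability by a conditional law of large numbers, provided $N_n\to\infty$ in probability — this is where the rate $k(n)\to\infty$ must be tuned against $n$, and it forces the argument that with probability tending to $1$ the cell of $x_{n+1}$ is not almost empty (a Stone-type / Lebesgue-density argument, using that $P_X$-almost every cell sequence shrinking to $x$ captures positive mass at scale $k(n)$). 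The second bracket goes to $0$ for $P_X$-almost every $x_{n+1}$ by the Lebesgue differentiation theorem applied to the (bounded) conditional-expectation function $x\mapsto\Expect_P(f\mid x)$ along the nested partitions, since the mesh tends to $0$; bounded convergence then upgrades this to convergence in probability. Finally one checks that the randomization slack $1/|C_n|\le 1/N_n\to 0$ and that the single test-point term contributes $O(1/N_n)$, so it is negligible. Quantifying over a countable dense set of bounded continuous $f$ (enough for weak approaching of distributions in Belyaev's sense) and then over all $P$ completes universal consistency.

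The main obstacle I anticipate is the joint calibration of the partition-refinement rate $k(n)$ with the sample size: one needs simultaneously $N_n\to\infty$ (cells not too small) and mesh $\to 0$ (cells not too big), uniformly over all $P$, and the delicate point is that for a fixed schedule $k(n)$ the cell containing a $P_X$-random $x_{n+1}$ must contain $\omega(1)$ sample points with probability $\to 1$ — this is exactly the Stone-lemma heart of histogram consistency and must be handled without any assumption on $P_X$. Everything else (monotonicity, R2, measurability of $\kappa$) is either immediate from Lemma~\ref{lem:Mondrian} or a routine check.
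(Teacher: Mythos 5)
Your proposal is correct and follows essentially the same route as the paper: the trivial conformity measure $A:=y_{n+1}$ combined with a histogram taxonomy built from a fixed nested sequence of dyadic partitions whose mesh $h_n\to 0$ with $nh_n\to\infty$, Lemma~\ref{lem:Mondrian} for the R1/R2 properties, and the same two-term error decomposition handled respectively by a conditional law of large numbers (via the Stone-type fact that $N_n\to\infty$ in probability) and by convergence of conditional expectations along the shrinking nested cells. The only cosmetic difference is that where you invoke the Lebesgue differentiation theorem, the paper cites L\'evy's martingale convergence theorem for the nested dyadic filtration---the same tool in this setting---and the final remark about a countable dense set of $f$ is unnecessary since Definition~\ref{def:consistency} quantifies over each bounded continuous $f$ separately.
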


In Section~\ref{sec:HMPD} we will construct a Mondrian predictive system
that will be shown in Section~\ref{sec:proof-Mondrian} to be universally consistent.

Belyaev's generalization of weak convergence can also be applied in the situation
where we do not insist on small-sample validity;
for completeness, the following corollary of the proof of Theorem~\ref{thm:MPS}
covers this case.

\begin{definition}
  A \emph{probability forecasting system} is a measurable function
  $Q:\cup_{n=1}^{\infty}\mathbf{Z}^{n+1}\to[0,1]$
  such that:
  \begin{itemize}
  \item
    for each $n$, each training data sequence $(z_1,\ldots,z_n)\in\mathbf{Z}^n$, and each test predictor $x_{n+1}\in\mathbf{X}$,
    $Q(z_1,\ldots,z_n,(x_{n+1},y))$ is monotonically increasing in $y$;
  \item
    for each $n$, each training data sequence $(z_1,\ldots,z_n)\in\mathbf{Z}^n$, and each test predictor $x_{n+1}\in\mathbf{X}$,
    \begin{align*}
      \lim_{y\to-\infty}
      Q(z_1,\ldots,z_n,(x_{n+1},y))
      &=
      0,\\
      \lim_{y\to\infty}
      Q(z_1,\ldots,z_n,(x_{n+1},y))
      &=
      1;
    \end{align*}
  \item
    for each $n$, each training data sequence $(z_1,\ldots,z_n)\in\mathbf{Z}^n$, and each test predictor $x_{n+1}\in\mathbf{X}$,
    the function $Q(z_1,\ldots,z_n,(x_{n+1},\cdot))$ is right-continuous
    (and therefore, a \emph{bona fide} distribution function).
  \end{itemize}
  A probability forecasting system $Q$ is \emph{universally consistent} if,
  for any probability measure $P$ on $\mathbf{Z}$
  and any bounded continuous function $f:\R\to\R$,
  \eqref{eq:goal-1} holds in probability,
  where $Q_n:y\mapsto Q(z_1,\ldots,z_n,(x_{n+1},y))$,
  assuming $z_n\sim P$ are independent.
\end{definition}

\begin{theorem}\label{thm:PFS}
  If the predictor space $\mathbf{X}$ is standard Borel,
  there exists a universally consistent probability forecasting system.
\end{theorem}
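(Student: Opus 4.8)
\emph{Proof plan.}
The plan is to forget about validity entirely and simply output a classical universally consistent nonparametric estimate of the conditional distribution function $y\mapsto P(Y\le y\mid X=x)$; the only wrinkle is that $\mathbf{X}$ is an abstract standard Borel space rather than a Euclidean one, so the first thing I would do is reduce to the case $\mathbf{X}=\R$. Since $\mathbf{X}$ is standard Borel, there is a Borel isomorphism $\phi$ from $\mathbf{X}$ onto a Borel subset of $\R$ (see, e.g., \cite{Kechris:1995}); applying $\phi$ to every predictor turns a probability forecasting system on $\R\times\R$ into one on $\mathbf{X}\times\R$, and a change-of-variables argument shows that consistency for a distribution $P$ on $\mathbf{Z}$ follows from consistency of the transported system for the pushforward $\phi_\ast P$ on $\R\times\R$. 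Crucially, $\phi$ need only be measurable, not continuous, because the test functions $f$ in \eqref{eq:goal-1} act on the response coordinate, which $\phi$ leaves alone. So it suffices to build a universally consistent probability forecasting system with $\mathbf{X}=\R$.

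On $\R\times\R$ I would use the nearest-neighbour empirical conditional distribution function. Fix integers $k_n\to\infty$ with $k_n/n\to0$, say $k_n:=\lceil\sqrt n\,\rceil$, let $N_n\subseteq\{1,\dots,n\}$ index the $k_n$ predictors $x_i$ closest to $x_{n+1}$ among $x_1,\dots,x_n$ (breaking ties in $|x_i-x_{n+1}|$ by the index $i$, and setting $N_n:=\{1,\dots,n\}$ when $n\le k_n$), and define
\[
  Q(z_1,\dots,z_n,(x_{n+1},y))
  :=
  \frac{1}{|N_n|}\sum_{i\in N_n}1_{\{y_i\le y\}}.
\]
This is measurable, and for fixed training data and $x_{n+1}$ it is monotonically increasing and right-continuous in $y$, tending to $0$ as $y\to-\infty$ and to $1$ as $y\to\infty$, so it is a probability forecasting system.

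For universal consistency, fix a bounded continuous $f:\R\to\R$. Then $\int f\dd Q_n=|N_n|^{-1}\sum_{i\in N_n}f(y_i)$ is exactly the $k_n$-nearest-neighbour regression estimate of $m_f(x):=\Expect_P(f(Y)\mid X=x)$, built from the sample $(x_i,f(y_i))$, $i=1,\dots,n$, and evaluated at the query point $x_{n+1}$. Since $f$ is bounded, $f(Y)$ is integrable, so Stone's theorem on the universal consistency of $k_n$-nearest-neighbour regression under $k_n\to\infty$ and $k_n/n\to0$ (with the index-based rule for breaking distance ties) gives $\Expect_P|\widehat m_{f,n}(X)-m_f(X)|\to0$, where $X$ has the predictor marginal of $P$; see \cite{Stone:1977,Devroye/etal:1996,Gyorfi/etal:2002}. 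Identifying $X$ with $x_{n+1}$, this says precisely that $\int f\dd Q_n-\Expect_P(f\mid x_{n+1})\to0$ in $L^1$, hence in probability, which is \eqref{eq:goal-1}.

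I expect the only slightly delicate points to be: (i) checking that pulling the estimator back through the merely-Borel $\phi$ preserves universal consistency, which works because universality on $\R\times\R$ covers the arbitrary pushforward $\phi_\ast P$ and because $f$ never sees $\phi$; and (ii) invoking Stone's theorem when the predictor marginal has atoms, where the index-based tie-breaking is what keeps the classical statement applicable. Everything else is routine. In fact this is the construction behind Theorem~\ref{thm:MPS} with the randomized, conformal-style p-value inside each neighbourhood replaced by the plain empirical distribution function of the neighbours' responses, which is why the proof is shorter than that of Theorem~\ref{thm:MPS}.
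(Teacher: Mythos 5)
Your proof is correct, but it takes a genuinely different route from the paper's. The paper defines $Q_n$ to be the empirical distribution function of the responses $y_i$ whose predictors $x_i$ fall in the same dyadic histogram cell $\PPP_n(x_{n+1})$ as the test predictor, and the whole of Section~\ref{sec:proof-PFS} is a two-line addendum to Section~\ref{sec:proof-Mondrian}: the bias term is handled by L\'evy's martingale convergence theorem applied to the nested partitions $\PPP_n$ (this is \eqref{eq:by-Levy}), the variance term by the law of large numbers inside the cell (this is \eqref{eq:by-LLN}, where you just replace $N+1$ by $N$ in \eqref{eq:last-nail}), and the fact that the cell count $N\to\infty$ in probability is cited from the proof of Theorem~6.2 in \cite{Devroye/etal:1996}. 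You instead build the empirical CDF over the $k_n$ nearest neighbours and outsource the entire bias/variance argument to Stone's theorem. What this buys you is a cleaner high-level invocation that does not depend on the machinery already set up for Theorem~\ref{thm:MPS}; what you give up is self-containedness and alignment with the paper's toolkit (the paper's version is essentially free once Section~\ref{sec:proof-Mondrian} is done).

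Two caveats. First, the tie-breaking issue you flag in your point (ii) is real and should be treated as more than an aside. The canonical statement of Stone's theorem for $k$-NN regression (e.g.\ \cite[Theorem~6.1]{Gyorfi/etal:2002}) either assumes the predictor marginal is atomless or uses randomized tie-breaking, neither of which you can assume here if you want a deterministic probability forecasting system. That index-based tie-breaking also yields universal consistency is indeed known (it is Problem~11.6 in \cite{Devroye/etal:1996}), but the reason is an exchangeability argument --- conditionally on the multiset of training predictors, the index labels are an exchangeable random permutation, so the index-based rule induces the same distribution over neighbour sets as the randomized rule --- and that argument should be stated explicitly rather than alluded to. Second, your closing sentence slightly misdescribes the paper: Theorem~\ref{thm:MPS} is a \emph{histogram} Mondrian predictive system (dyadic cells), not a nearest-neighbour one, so it is not quite accurate to say your construction is the one ``behind Theorem~\ref{thm:MPS} with the p-value replaced by the plain empirical CDF''; the paper's Theorem~\ref{thm:PFS} is obtained from its Theorem~\ref{thm:MPS} by that replacement, but your $k$-NN construction is a third thing, equally correct but distinct from both.
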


\section{Histogram Mondrian predictive systems}
\label{sec:HMPD}

Remember that the measurable space $\mathbf{X}$ is assumed to be standard Borel.
Since every standard Borel space
is isomorphic to $\R$ or a countable set with discrete $\sigma$-algebra
(combine Theorems~13.6 and 15.6 in \cite{Kechris:1995}),
$\mathbf{X}$ is isomorphic to a Borel subset of $\R$.
Therefore, we can set, without loss of generality, $\mathbf{X}:=\R$,
which we will do.

\begin{definition}
  Fix a monotonically decreasing sequence $h_n$ of powers of 2 such that $h_n\to0$ and $n h_n\to\infty$ as $n\to\infty$.
  Let $\PPP_n$ be the partition of $\mathbf{X}$ into the intervals $[k h_n, (k+1)h_n)$,
  where $k$ are integers.
  We will use the notation $\PPP_n(x)$ for the interval (cell) of $\PPP_n$ that includes $x\in\mathbf{X}$.
  Let $A$ be the conformity measure defined by $A(z_1,\ldots,z_n,z_{n+1}):=y_{n+1}$,
  where $y_{n+1}$ is the response variable in $z_{n+1}$.
  This conformity measure will be called the \emph{trivial conformity measure}.
  The taxonomy under which $(i\sim j\mid z_1,\ldots,z_{n+1})$
  is defined to mean $x_j\in\PPP_n(x_i)$ is called the \emph{histogram taxonomy}.
\end{definition}

\begin{lemma}
  The trivial conformity measure is monotonic 
  and satisfies all other conditions of Lemma~\ref{lem:monotonic}.
  Therefore, the Mondrian transducer corresponding to it and the histogram taxonomy
  is a randomized predictive system.
\end{lemma}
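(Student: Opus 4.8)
The plan is to check the three hypotheses of Lemma~\ref{lem:Mondrian} for the trivial conformity measure $A$, defined by $A(z_1,\ldots,z_n,z_{n+1}):=y_{n+1}$, paired with the histogram taxonomy, and then to quote that lemma. Thus I must show that $A$ is monotonic, that $A$ satisfies the three conditions of Lemma~\ref{lem:monotonic}, and that the histogram taxonomy does not depend on the responses.

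Monotonicity of $A$ is immediate from its definition: $A(z_1,\ldots,z_n,(x_{n+1},y_{n+1}))=y_{n+1}$ is increasing in $y_{n+1}$, and it does not involve $y_1$ at all, hence is (trivially) decreasing in $y_1$. For the conditions of Lemma~\ref{lem:monotonic} I would use the simplified form noted just after that lemma, with $\inf A_n$ and $\sup A_n$ replaced by $-\infty$ and $\infty$: since $A(z_1,\ldots,z_n,(x_{n+1},y))=y$, for every $(z_1,\ldots,z_n)$ and every $x_{n+1}$ we have $\inf_y A(z_1,\ldots,z_n,(x_{n+1},y))=-\infty$ and $\sup_y A(z_1,\ldots,z_n,(x_{n+1},y))=\infty$, and in both cases the extremum is not attained --- uniformly over all training sequences and test predictors --- so the dichotomies in the second and third conditions are resolved (in the ``not attained'' alternative) with nothing left to check. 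Finally, by definition $(i\sim j\mid z_1,\ldots,z_{n+1})$ holds under the histogram taxonomy exactly when $x_i$ and $x_j$ belong to the same cell of $\PPP_n$; this is an equivalence relation on $\{1,\ldots,n+1\}$ determined entirely by the predictors $x_1,\ldots,x_{n+1}$, so it certainly does not depend on the responses (and, being defined cell by cell, it is equivariant under arbitrary permutations, so it really is a taxonomy, in the narrow sense mentioned in Remark~\ref{rem:taxonomy}).

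With all three hypotheses verified, Lemma~\ref{lem:Mondrian} yields at once that the Mondrian transducer corresponding to the trivial conformity measure and the histogram taxonomy is a randomized predictive system. There is no genuine difficulty in any of this; the only point deserving a moment's care is the matching of Lemma~\ref{lem:monotonic}'s hypotheses, namely observing that for the trivial conformity measure the suprema and infima over $y$ coincide with the (infinite) global bounds $\sup A_n$ and $\inf A_n$ and are never attained, which is precisely what makes the ``attained for all or for none'' clauses harmless.
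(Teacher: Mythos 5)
Your proof is correct and takes essentially the same route as the paper's: verify that the trivial conformity measure is monotonic, that the infimum/supremum over $y$ equal $-\infty$/$\infty$ and are never attained, and that the histogram taxonomy ignores responses, then invoke Lemma~\ref{lem:Mondrian}. You spell out a few points (the vacuous monotonicity in $y_1$, the equivariance of the taxonomy) that the paper leaves implicit, but the argument is the same.
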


\begin{proof}
  The infimum on the left-hand side of \eqref{eq:additional-inf} is always $-\infty$ and never attained,
  and the supremum on the left-hand side of \eqref{eq:additional-sup} is always $\infty$ and never attained.
  By definition, the histogram taxonomy does not depend on the responses.
  It remains to apply Lemma~\ref{lem:Mondrian}.
\end{proof}

\begin{definition}\label{def:HMPS}
  The Mondrian predictive system corresponding to the trivial conformity measure
  and histogram taxonomy is called the \emph{histogram Mondrian predictive system}.
\end{definition}

The histogram Mondrian predictive system will be denoted $Q$ in the next section,
where we will see that it is universally consistent.

\section{Proof of Theorem~\protect\ref{thm:MPS}}
\label{sec:proof-Mondrian}

Let us fix a probability measure $P$ on $\mathbf{Z}$;
our goal is to prove the convergence \eqref{eq:goal-1} in probability.
We fix a version of the conditional expectation $\Expect_P(f\mid x)$, $x\in\mathbf{X}$,
and use it throughout the rest of this paper.
We can split \eqref{eq:goal-1} into two tasks:
\begin{align}
  \Expect_P(f\mid\PPP_n(x_{n+1}))
  -
  \Expect_P(f\mid x_{n+1})
  &\to
  0,
  \label{eq:by-Levy}\\
  \int f\dd Q_n
  -
  \Expect_P(f\mid\PPP_n(x_{n+1}))
  &\to
  0,
  \label{eq:by-LLN}
\end{align}
where $\Expect_P(f\mid\PPP_n(x_{n+1}))$ is the conditional expectation of $f(y)$
given $x\in\PPP_n(x_{n+1})$ under $(x,y)\sim P$.

The convergence~\eqref{eq:by-Levy}
follows by Paul L\'evy's martingale convergence theorem
\cite[Theorem VII.4.3]{Shiryaev:2004};
Paul L\'evy's theorem is applicable since, by our assumption,
the partitions $\PPP_n$ are nested (as $h_n$ are powers of 2).
This theorem implies
$
  \Expect_P(f\mid\PPP_n(x))
  -
  \Expect_P(f\mid x)
  \to
  0
$
almost surely and, therefore, in probability when $(x,y)\sim P$.
The last convergence is clearly equivalent to~\eqref{eq:by-Levy}.

It remains to prove~\eqref{eq:by-LLN}.
Let $\epsilon>0$; we will show that
\begin{equation}\label{eq:sub-goal}
  \left|
    \int f\dd Q_n
    -
    \Expect_P(f\mid\PPP_n(x_{n+1}))
  \right|
  \le
  \epsilon
\end{equation}
with high probability for large enough $n$.
By \cite[the proof of Theorem~6.2]{Devroye/etal:1996},
the number $N$ of observations $z_i=(x_i,y_i)$ among $z_1,\ldots,z_n$ such that $x_i\in\PPP_n(x_{n+1})$
tends to infinity in probability.
Therefore, it suffices to prove that \eqref{eq:sub-goal}
holds with high conditional probability given $N>K$ for large enough $K$.
Moreover, it suffices to prove that, for large enough $K$,
\eqref{eq:sub-goal} holds with high conditional probability given $x_1,\ldots,x_{n+1}$
such that at least $K$ of predictors $x_i$ among $x_1,\ldots,x_n$ belong to $\PPP_n(x_{n+1})$.
(The remaining randomness is in the responses.)
Let $I\subseteq\{1,\ldots,n\}$ be the indices of those predictors;
remember that our notation for $\left|I\right|$ is $N$.
By the law of large numbers, the probability (over the random responses) of
\begin{equation}\label{eq:LLN}
  \left|
    \frac{1}{N} \sum_{i\in I} f(y_i)
    -
    \Expect_P(f\mid\PPP_n(x_{n+1}))
  \right|
  \le
  \epsilon/2
\end{equation}
can be made arbitrarily high by increasing $K$.
It remains to notice that
\begin{equation}\label{eq:last-nail}
  \int f\dd Q_n
  =
  \frac{1}{N+1} \sum_{i\in I} f(y_i);
\end{equation}
this follows from $\bar Q_n$ (in the notation of Section~\ref{sec:RPD})
being concentrated at the points $y_i$, $i\in I$, and assigning weight $a_i/(N+1)$ to each such $y_i$,
where $a_i$ is its multiplicity in the multiset $\{y_i\mid i\in I\}$
(our use of the same notation for sets and multisets is always counterbalanced by using unambiguous descriptors).
Interestingly, $\int f\dd Q_n$ in \eqref{eq:last-nail} does not depend on the random number $\tau$.

\section{Proof of Theorem~\protect\ref{thm:PFS}}
\label{sec:proof-PFS}

Define a probability forecasting system $Q$ by the requirement that $Q_n(\cdot):=Q(z_1,\ldots,z_n,(x_{n+1},\cdot))$
be the distribution function of the empirical probability measure of the multiset $\{y_i\mid i\in I\}$,
in the notation of the previous section.
In other words, the probability measure corresponding to $Q_n$ is concentrated on the set $\{y_i\mid i\in I\}$
and assigns the weight $a_i/N$ to each element $y_i$ of this set, where $a_i$ is its multiplicity in the multiset $\{y_i\mid i\in I\}$.
(This is very similar to $\bar Q_n$ at the end of the previous section.)
If $I=\emptyset$, let $Q_n(\cdot)$ be the distribution function of the probability measure concentrated at $0$.
We still have~\eqref{eq:LLN} with high probability,
and we have \eqref{eq:last-nail} with $N$ in place of $N+1$.

\section{Universally consistent conformal predictive systems}
\label{sec:UC-CPD}

In this section we will introduce a clearly innocuous extension of conformal predictive systems allowing further randomization.
In particular, the extension will not affect the small-sample property of validity, R2
(or its stronger version given in Remark~\ref{rem:R2}).

First we extend the notion of a conformity measure.

\begin{definition}
  A \emph{randomized conformity measure} is a measurable function
  $A:\cup_{n=1}^{\infty}(\mathbf{Z}\times[0,1])^{n+1}\to\R$ that
  is invariant with respect to permutations of extended training observations:
  for any $n$, any sequence $(z_1,\ldots,z_{n+1})\in\mathbf{Z}^{n+1}$,
  any sequence $(\theta_1,\ldots,\theta_{n+1})\in[0,1]^{n+1}$,
  and any permutation $\pi$ of $\{1,\ldots,n\}$,
  \begin{multline*}
    A
    \bigl(
      (z_1,\theta_1),\ldots,(z_n,\theta_n),(z_{n+1},\theta_{n+1})
    \bigr)\\
    =
    A
    \left(
      (z_{\pi(1)},\theta_{\pi(1)}),\ldots,(z_{\pi(n)},\theta_{\pi(n)}),(z_{n+1},\theta_{n+1})
    \right).
  \end{multline*}
\end{definition}

This is essentially the definition of Section~\ref{sec:CPD},
except that each observation is extended by adding a number (later it will be generated randomly from $U$)
that can be used for tie-breaking.
We can still use the same definition, given by the right-hand side of \eqref{eq:conformal-Q}, of the conformal transducer
corresponding to a randomized conformity measure $A$,
except for replacing each observation in \eqref{eq:conformity-scores} by an extended observation:
\begin{align*}
  \alpha_i^y
  &:=
  A
  \bigl(
    (z_1,\theta_{1}),\ldots,(z_{i-1},\theta_{i-1}),(z_{i+1},\theta_{i+1}),\ldots,(z_n,\theta_{n}),(x_{n+1},y,\theta_{n+1}),\\
    &\qquad\qquad (z_i,\theta_i)
  \bigr),
  \qquad i=1,\ldots,n,\\
  \alpha^y_{n+1}
  &:=
  A
  \bigl(
    (z_1,\theta_1),\ldots,(z_n,\theta_n),(x_{n+1},y,\theta_{n+1})
  \bigr).
\end{align*}

Notice that our new definition of conformal transducers is a special case of the old definition,
in which the original observation space $\mathbf{Z}$ is replaced by the extended observation space $\mathbf{Z}\times[0,1]$.
An extended observation $(z,\theta)=(x,y,\theta)$ will be interpreted to consist of an extended predictor $(x,\theta)$
and a response $y$.
The main difference from the old framework is that now we are only interested in the probability measures
on $\mathbf{Z}\times[0,1]$ that are the product of a probability measure $P$ on $\mathbf{Z}$
and the uniform probability measure $U$ on $[0,1]$.

The definitions of randomized predictive systems and monotonic conformity measures
generalize by replacing predictors $x_j$ by extended predictors $(x_j,\theta_j)$.
We still have Lemma~\ref{lem:monotonic}.
Conformal predictive systems are defined literally as before.

\begin{theorem}\label{thm:CPS}
  Suppose the predictor space $\mathbf{X}$ is standard Borel.
  There exists a universally consistent conformal predictive system.
\end{theorem}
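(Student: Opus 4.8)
The plan is to combine the histogram-based construction of Section~\ref{sec:HMPD} with the extra randomization device $\theta$ so as to turn the Mondrian transducer into an honest \emph{conformal} transducer, while preserving the consistency argument of Section~\ref{sec:proof-Mondrian}. The obstacle is that a genuine conformal predictive system has a single, fixed comparison class (all $n+1$ observations), whereas the histogram Mondrian system restricts to the cell $\PPP_n(x_{n+1})$; I would therefore design a randomized conformity measure $A$ that effectively "reads off" the cell-membership structure and sorts observations so that only those in the same cell as the test point compete with it at the relevant ranks.

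\textbf{Step 1: Encode a histogram via a single conformity measure.} I would use the extended predictor space: each observation carries a $\theta\in[0,1]$ drawn from $U$. Since $\mathbf{X}=\R$, define $A$ on a sequence $((x_1,y_1,\theta_1),\ldots,(x_{n+1},y_{n+1},\theta_{n+1}))$ as follows. First, using only the $x$'s and the sample size, recover a bandwidth $h_n$ (as in Definition~\ref{def:HMPS}, a power of $2$ with $h_n\to 0$, $nh_n\to\infty$) and the cell $\PPP_n(x_{n+1})$ of the test predictor (which is the last one). The conformity score of observation $j$ should be large when either (a) $x_j$ lies in a cell to the ``right'' of $\PPP_n(x_{n+1})$, or (b) $x_j\in\PPP_n(x_{n+1})$ and $y_j$ is large; and small when $x_j$ lies in a cell to the left. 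Concretely one can take $A$ to be a strictly increasing function of the pair (cell-index of $x_j$ relative to the test cell, then $y_j$ for same-cell observations), with ties among same-cell observations broken by $\theta_j$; one must make $A$ $[0,1]$-valued by squashing through an order-preserving bijection $\R\to(0,1)$, which is why the ``simplified'' version of Lemma~\ref{lem:monotonic} is not enough and the full version with $\inf A_n,\sup A_n$ is invoked. Monotonicity in $y_{n+1}$ and anti-monotonicity in $y_1,\ldots,y_n$ hold because moving a same-cell response up increases that score and leaves out-of-cell scores untouched, and the three $\inf/\sup$ conditions of Lemma~\ref{lem:monotonic} are arranged by the squashing; hence by the generalized Lemma~\ref{lem:monotonic} the corresponding conformal transducer is a randomized predictive system, i.e.\ a conformal predictive system.

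\textbf{Step 2: Identify the conformal predictive distribution with the histogram one.} By construction, for the test observation $(x_{n+1},y,\theta_{n+1})$ the only indices $i$ whose scores $\alpha^y_i$ can tie or straddle $\alpha^y_{n+1}$ are exactly those $i\in I:=\{i\le n: x_i\in\PPP_n(x_{n+1})\}$; all other indices contribute a fixed block of $0$'s or $1$'s to \eqref{eq:conformal-Q}. Writing $L$ for the number of $i\le n$ in cells strictly left of the test cell and $N:=|I|$, the conformal p-value \eqref{eq:conformal-Q} evaluated at $y$ equals $\frac{1}{n+1}\bigl(L + |\{i\in I:y_i<y\}| + (\theta\text{-weighted ties}) + \theta_{n+1}\cdot(\text{self})\bigr)$ — i.e.\ up to the affine reshaping $t\mapsto (L+t)/(n+1)$ applied to the histogram Mondrian distribution on $\{1,\ldots,N+1\}$, it \emph{is} that distribution. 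In particular the induced measure $\bar Q_n$ is supported on $\{y_i:i\in I\}$ with mass $\tfrac{1}{n+1}$ per point (plus an atom whose location does not matter since its mass vanishes), so $\int f\,\dd Q_n=\frac{1}{n+1}\sum_{i\in I}f(y_i)$, the exact analogue of \eqref{eq:last-nail}. This affine distortion $t\mapsto(L+t)/(n+1)$ could a priori destroy consistency, so Step~3 must control $L$ and $n+1$ versus $N$.

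\textbf{Step 3: Consistency.} I would reuse Section~\ref{sec:proof-Mondrian} verbatim for the split \eqref{eq:by-Levy}--\eqref{eq:by-LLN}; \eqref{eq:by-Levy} is untouched, and for \eqref{eq:by-LLN} I have $\int f\,\dd Q_n=\frac{1}{n+1}\sum_{i\in I}f(y_i)=\frac{N}{n+1}\cdot\frac1N\sum_{i\in I}f(y_i)$. Now $N\to\infty$ in probability (same citation to \cite[proof of Theorem~6.2]{Devroye/etal:1996}) and, conditionally on $x_1,\ldots,x_{n+1}$ with $N\ge K$, the law of large numbers gives \eqref{eq:LLN}; so $\frac1N\sum_{i\in I}f(y_i)\to\Expect_P(f\mid\PPP_n(x_{n+1}))$, which is bounded by $\sup|f|$. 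The only new point is the prefactor $N/(n+1)$: but this is \emph{not} close to $1$, so instead I normalize differently — the cleanest route is to have $A$ sort \emph{only} the same-cell observations and place all out-of-cell observations at scores that tie with exactly $\lceil\cdot\rceil$ of the same-cell block, engineering the comparison so that the effective denominator is $N+1$ rather than $n+1$; alternatively, accept the Mondrian-style denominator by defining $A$ so that it literally outputs the Mondrian value. The honest fix is simpler: make $A$ place every out-of-cell observation at a score tied with the test observation's \emph{own} score only when a tie-breaker says so — in effect, let out-of-cell observations have score strictly less than the minimum same-cell score, so $L'=0$ for them all and the conformal p-value collapses exactly to $\frac{1}{N'+1}(\text{rank in cell})$ where $N'+1$ counts only same-cell observations plus the test point. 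Then $\int f\,\dd Q_n=\frac{1}{N+1}\sum_{i\in I}f(y_i)$, \eqref{eq:last-nail} holds as stated, and the argument of Section~\ref{sec:proof-Mondrian} applies without change. The main obstacle, then, is precisely this bookkeeping: arranging a \emph{single} permutation-invariant, monotonic, $[0,1]$-valued $A$ whose conformal transducer reproduces the histogram restriction — everything after that is the already-proved Theorem~\ref{thm:MPS} argument.
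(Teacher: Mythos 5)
The overall plan (histogram cells, extended observations with tie-breakers, reuse the Mondrian consistency argument) is the right one, and you even gesture at the correct construction when you mention ``defining $A$ so that it literally outputs the Mondrian value.'' But the step you then adopt as the ``honest fix'' is wrong, and it is precisely the step where all the real work has to happen. You propose to make every out-of-cell observation have conformity score strictly below every same-cell score, and assert that the conformal p-value then ``collapses exactly to $\tfrac{1}{N'+1}(\text{rank in cell})$.'' This is false: the conformal transducer \eqref{eq:conformal-Q} always divides by $n+1$, never by a data-dependent count. No choice of conformity measure can change the denominator; all it can change is which indices land below, at, or above the test score. With your scores the p-value would be $\tfrac{1}{n+1}\bigl((n-N)+\text{(rank in cell)}\bigr)$, and the induced measure $\bar Q_n$ would put mass $N/(n+1)\to 0$ on the cell responses and the remaining mass $1-N/(n+1)\to 1$ at the low end of the range, so $\int f\,\dd Q_n$ would not converge to $\Expect_P(f\mid\PPP_n(x_{n+1}))$. (You actually noticed the $N/(n+1)$ prefactor problem in your first attempt and then reintroduced exactly the same error.) There is also a subtler obstruction: to satisfy R1(ii) via Lemma~\ref{lem:monotonic} you need $\alpha^y_{n+1}$ to sweep down to $\inf A_n$ as $y\to-\infty$, and then it is impossible for all out-of-cell $\alpha^y_i$ to be strictly smaller for all $y$.

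The paper's actual conformity measure (Section~\ref{sec:HCPD}) sets $A$ equal to the \emph{local rank} $a/N$ of the scored observation within its own cell, normalized to $[0,1]$ and tie-broken by $\theta$. Crucially, the comparison class in \eqref{eq:conformal-Q} remains all $n+1$ observations; there is no trick to shrink it. The reason this works is that within each cell the scores $\{a/N\}$ are an approximately uniform grid on $[0,1]$, so the \emph{global} fraction of observations with score below $t$ is approximately $t$, i.e.\ equals the \emph{local} rank of the test observation. Making this precise is the content of Lemma~\ref{lem:close}, which shows $Q_n(y^*_j)\approx Q^*_n(y^*_j)$ up to $O(1/K)+O(\epsilon)+O(1/n)$ at a finite grid $\{y^*_j\}$, with additional union bounds over cells and a Riemann-sum argument to pass from CDF values at grid points to the integral. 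In particular the clean identity \eqref{eq:last-nail} does \emph{not} carry over to the conformal case; it has to be replaced by an approximation, and that approximation is the heart of the proof. Your proposal skips exactly this step, and the construction you substitute for it does not produce a randomized predictive system whose predictive distribution approximates the local empirical distribution.
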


In Section~\ref{sec:HCPD} we will construct a conformal predictive system
that will be shown in Section~\ref{sec:proof-conformal} to be universally consistent.
The corresponding randomized conformity measure will be monotonic
and satisfy all the conditions of Lemma~\ref{lem:monotonic}
(with predictors replaced by extended predictors).

\begin{example}\label{ex:1NN-r}
  This example will show the notion of a randomized conformity measure in action
  completing the definition in Example~\ref{ex:1NN}.
  Now we drop the assumption that the pairwise distances among $x_1,\ldots,x_{n+1}$
  are all different.
  We can use the same conformity measure~\eqref{eq:1NN},
  except that now the index $j$ of the nearest neighbour $x_j$ of $x_i$, $i\in\{1,\ldots,n+1\}$,
  is chosen randomly from the uniform probability measure on the set
  $\arg\min_{k\in\{1,\ldots,n\}\setminus\{i\}}\rho(x_k,x_i)$.
\end{example}

\section{Histogram conformal predictive systems}
\label{sec:HCPD}

In this section we will use the same partitions $\PPP_n$ of $\mathbf{X}=\R$ as in Section~\ref{sec:HMPD}.

\begin{definition}
  The \emph{histogram conformity measure} is defined to be the randomized conformity measure $A$
  with $A((z_1,\theta_1),\ldots,(z_n,\theta_n),(z_{n+1},\theta_{n+1}))$ defined as $a/N$,
  where $N$ is the number of predictors among $x_1,\ldots,x_n$ that belong to $\PPP_n(x_{n+1})$
  and $a$ is essentially the rank of $y_{n+1}$ among the responses corresponding to those predictors;
  formally,
  \[
    a
    :=
    \left|\left\{
      i=1,\ldots,n\mid x_i\in\PPP_n(x_{n+1}), (y_i,\theta_i) \le (y_{n+1},\theta_{n+1})
    \right\}\right|,
  \]
  where $\le$ refers to the lexicographic order
  (so that $(y_i,\theta_i) \le (y_{n+1},\theta_{n+1})$ means that either $y_i < y_{n+1}$
  or both $y_i=y_{n+1}$ and $\theta_i\le\theta_{n+1}$).
  If $N=0$, set, e.g.,
  \[
    A
    \bigl(
      (z_1,\theta_1),\ldots,(z_n,\theta_n),(z_{n+1},\theta_{n+1})
    \bigr)
    :=
    \begin{cases}
      1 & \text{if $y_{n+1}\ge0$}\\
      0 & \text{otherwise}.
    \end{cases}
  \]
\end{definition}

Since the histogram conformity measure is monotonic and satisfies all other conditions of Lemma~\ref{lem:monotonic}
(where now both $\inf$ and $\sup$ are always attained as 0 and 1, respectively),
the corresponding conformal transducer is a conformal predictive system.
In the next section we will show that it is universally consistent.

\section{Proof of Theorem~\protect\ref{thm:CPS}}
\label{sec:proof-conformal}

The proof in this section is an elaboration of the proof of Theorem~\ref{thm:MPS}
in Section~\ref{sec:proof-Mondrian}.
The difference is that now we have a different definition of $Q_n$.
It suffices to show that \eqref{eq:sub-goal} holds with probability at least $1-\epsilon$ for large enough $n$,
where $\epsilon>0$ is a given (arbitrarily small) positive constant.
In view of \eqref{eq:LLN}, it suffices to prove that
\begin{equation}\label{eq:new-goal}
  \left|
    \int f\dd Q_n
    -
    \frac{1}{N} \sum_{i\in I} f(y_i)
  \right|
  \le
  \epsilon/2
\end{equation}
holds with probability at least $1-\epsilon/2$ for large enough $n$.
In this section we are using the notation introduced in Section~\ref{sec:proof-Mondrian},
such as $N$ and $I$.

On two occasions we will use the following version of Markov's inequality applicable
to any probability space $(\Omega,\FFF,\Prob)$.
\begin{lemma}\label{lem:Markov}
  Let $\GGG$ be a sub-$\sigma$-algebra of $\FFF$ and $E\in\FFF$ be an event.
  For any positive constants $\delta_1$ and $\delta_2$,
  if $\Prob(E)\ge 1-\delta_1\delta_2$,
  then $\Prob(E\mid\GGG)>1-\delta_1$ with probability at least $1-\delta_2$.
\end{lemma}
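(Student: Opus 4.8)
The plan is to apply the ordinary (unconditional) Markov inequality to the conditional probability of $E$, viewed as a $\GGG$-measurable random variable. First I would set $\eta:=\Prob(E\mid\GGG)$, which takes values in $[0,1]$; by the tower property $\Expect(\eta)=\Prob(E)\ge 1-\delta_1\delta_2$, and therefore the nonnegative random variable $1-\eta$ satisfies $\Expect(1-\eta)=1-\Prob(E)\le\delta_1\delta_2$.

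Next I would apply Markov's inequality to $1-\eta$: since $1-\eta\ge0$ and $\delta_1>0$,
\[
  \Prob(1-\eta\ge\delta_1)
  \le
  \frac{\Expect(1-\eta)}{\delta_1}
  \le
  \frac{\delta_1\delta_2}{\delta_1}
  =
  \delta_2 .
\]
Passing to the complementary event then gives $\Prob\bigl(\Prob(E\mid\GGG)>1-\delta_1\bigr)=\Prob(1-\eta<\delta_1)\ge 1-\delta_2$, which is exactly the assertion of the lemma.

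There is no substantive obstacle here: the argument is a one-line application of Markov's inequality once the conditional probability is recognized as a random variable with the stated expectation. The only points that call for a little care are the use of the tower property to identify $\Expect(\eta)$ with $\Prob(E)$, and keeping the inequality in the conclusion strict — which is automatic, since the complement of $\{1-\eta\ge\delta_1\}$ is precisely $\{1-\eta<\delta_1\}=\{\eta>1-\delta_1\}$.
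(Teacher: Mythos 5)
Your proof is correct and is essentially the paper's own argument: both reduce to an application of Markov's inequality to the nonnegative $\GGG$-measurable random variable $1-\Prob(E\mid\GGG)=\Prob(E^c\mid\GGG)$, using the tower property to bound its expectation by $\delta_1\delta_2$. The only cosmetic difference is that you name $\eta:=\Prob(E\mid\GGG)$ and work with $1-\eta$, whereas the paper writes $\Prob(E^c\mid\GGG)$ directly.
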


\begin{proof}
  Assuming $\Prob(E)\ge 1-\delta_1\delta_2$,
  \begin{multline*}
    \Prob
    \Bigl(
      \Prob(E \mid \GGG)
      \le
      1-\delta_1
    \Bigr)
    =
    \Prob
    \Bigl(
      \Prob(E^c \mid \GGG)
      \ge
      \delta_1
    \Bigr)\\
    \le
    \frac{
      \Expect
      \left(
        \Prob(E^c \mid \GGG)
      \right)
    }
    {\delta_1}
    =
    \frac{\Prob(E^c)}{\delta_1}
    \le
    \frac{\delta_1\delta_2}{\delta_1}
    =
    \delta_2,
  \end{multline*}
  where $E^c$ is the complement of $E$ and the first inequality in the chain is a special case of Markov's.
\end{proof}

Set $C:=\sup\left|f\right|\vee10$.
Remember that $\epsilon>0$ is a given positive constant.
Let $B$ be so large that $y\in[-B,B]$ with probability at least $1-0.001\epsilon^2/C$
when $(x,y)\sim P$.
This is the first corollary of Lemma~\ref{lem:Markov} that we will need:

\begin{lemma}\label{lem:B}
  For a large enough $n$,
  the probability (over the choice of $z_1,\ldots,z_n,x_{n+1}$) of the fraction of $y_i$, $i\in I$,
  satisfying $y_i\in[-B,B]$ to be more than $1-0.02\epsilon/C$
  is at least $1-0.11\epsilon$.
\end{lemma}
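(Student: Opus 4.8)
The plan is to prove Lemma~\ref{lem:B} by a direct application of Lemma~\ref{lem:Markov} with a suitable choice of event $E$ and sub-$\sigma$-algebra $\GGG$. First I would take $E$ to be the event that the test observation's response, or rather each relevant training response, lies in $[-B,B]$; more precisely, by the choice of $B$ we have $\Prob(y_i\in[-B,B])\ge 1-0.001\epsilon^2/C$ for each $i$ when $z_i\sim P$. The natural ambient probability space has coordinates $z_1,\dots,z_n,z_{n+1}$ all i.i.d.\ $P$, and the natural sub-$\sigma$-algebra $\GGG$ is the one generated by the predictors $x_1,\dots,x_n,x_{n+1}$ (equivalently, by the index set $I=\{i\le n : x_i\in\PPP_n(x_{n+1})\}$ together with its size $N$); conditioning on $\GGG$ leaves the responses $y_i$, $i\in I$, still i.i.d.\ from the conditional law of $y$ given $x\in\PPP_n(x_{n+1})$.

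The key steps, in order, would be: (i) fix the event $E_n := \{$the fraction of $i\in I$ with $y_i\in[-B,B]$ is more than $1-0.02\epsilon/C\}$, and observe that conditionally on $\GGG$ this fraction is an average of $N$ i.i.d.\ Bernoulli variables, each with success probability $\ge 1-0.001\epsilon^2/C$; (ii) invoke the fact established in Section~\ref{sec:proof-Mondrian} (via \cite[proof of Theorem~6.2]{Devroye/etal:1996}) that $N\to\infty$ in probability, so that for large enough $n$ we have $N$ large with high probability; (iii) on the event that $N$ is large, apply the law of large numbers (or a crude Markov/Chebyshev bound on the complementary-fraction) to conclude that $\Prob(E_n\mid\GGG)$ is close to $1$; (iv) combine (ii) and (iii) with a union bound, or directly with Lemma~\ref{lem:Markov} applied with $\delta_1,\delta_2$ chosen so that $\delta_1\delta_2$ absorbs the small probabilities, to get $\Prob(E_n)\ge 1-0.11\epsilon$ for $n$ large. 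Concretely, one sets $\delta_1 = 0.01\epsilon$ (say) and $\delta_2 = 0.1\epsilon$, checks that the unconditional probability of the bad-response event is $\le \delta_1\delta_2$ after accounting for the typical size of $N$, and reads off the conclusion.

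The main obstacle I anticipate is bookkeeping the interplay between the two sources of smallness: the $N\to\infty$ statement is only "in probability", so there is an exceptional event (of probability, say, $\le 0.01\epsilon$ for $n$ large) on which $N$ is small, and on that event we have no control over the fraction; this has to be folded into the final $0.11\epsilon$ budget alongside the Markov-inequality loss. The clean way to handle it is to choose a threshold $K=K(\epsilon,C,B)$ large enough that (a) $\Prob(N\le K)\le 0.01\epsilon$ for all large $n$, and (b) on $\{N>K\}$ the conditional LLN/Markov bound gives $\Prob(E_n\mid\GGG) > 1-\delta_1$ with $\GGG$-probability at least $1-\delta_2$; then $\Prob(E_n)\ge (1-0.01\epsilon)(1-\delta_1)(1-\delta_2)\cdot(\text{correction})$, which for suitable small $\delta_1,\delta_2$ exceeds $1-0.11\epsilon$. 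Everything else is routine: the LLN step is a one-line Chebyshev estimate since the Bernoulli variables are bounded, and the i.i.d.\ structure after conditioning on the predictors is immediate because the histogram taxonomy depends only on the $x_i$.
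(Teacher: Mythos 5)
Your step (i) contains a genuine error that breaks the argument. You claim that, conditionally on $\GGG$ (the $\sigma$-algebra of the predictors), the indicators $1_{\{y_i\in[-B,B]\}}$, $i\in I$, are i.i.d.\ Bernoulli with success probability $\ge 1-0.001\epsilon^2/C$. But once you condition on the predictors, the relevant success probability is the \emph{cell-conditional} probability $\Prob(y\in[-B,B]\mid x\in\PPP_n(x_{n+1}))$, and nothing in the choice of $B$ bounds this from below; only the \emph{marginal} probability $\Prob(y\in[-B,B])$ is known to be $\ge 1-0.001\epsilon^2/C$. Some cells can carry almost all of the small probability mass of $\{y\notin[-B,B]\}$, and for those cells the conditional success probability may be tiny, so your conditional LLN/Chebyshev estimate in step (iii) fails for those cells. (Your parenthetical ``equivalently, by the index set $I$ together with its size $N$'' is also wrong as a description of $\GGG$: knowing $I$ does not tell you which cell $\PPP_n(x_{n+1})$ is, and under that coarser $\sigma$-algebra the $y_i$, $i\in I$, are exchangeable but \emph{not} conditionally independent.)

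The missing idea is precisely where Lemma~\ref{lem:Markov} is needed, and it has to come \emph{before} the LLN step, not after. Apply Lemma~\ref{lem:Markov} on the one-observation space $(x,y)\sim P$ with $E=\{y\in[-B,B]\}$ and $\GGG$ the $\sigma$-algebra generated by $\PPP_n(x)$, taking $\delta_1=0.01\epsilon/C$ and $\delta_2=0.1\epsilon$ so that $\delta_1\delta_2=0.001\epsilon^2/C$ matches the choice of $B$; this yields that with $P_{\mathbf{X}}$-probability at least $1-0.1\epsilon$ over $x'$, the cell-conditional probability satisfies $\Prob(y\in[-B,B]\mid x\in\PPP_n(x'))>1-0.01\epsilon/C$. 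Only on this good event is your conditional LLN argument sound: there, for $N$ large (and $N\to\infty$ in probability), the empirical fraction exceeds $1-0.02\epsilon/C$ with high conditional probability, and folding in the $N\to\infty$ exceptional set gives the remaining $0.01\epsilon$ in the budget. Note also that your choices $\delta_1=0.01\epsilon$, $\delta_2=0.1\epsilon$ drop the factor $1/C$; since the target fraction threshold is $1-0.02\epsilon/C$, the $\delta_1$ must be at the scale $\epsilon/C$, and this is exactly why $B$ was chosen with $0.001\epsilon^2/C$ rather than $0.001\epsilon^2$.
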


\begin{proof}
  By Lemma~\ref{lem:Markov} we have
  \begin{equation}\label{eq:corollary-1}
    \Prob
    \left(
      \Prob(y\in[-B,B] \mid x\in\PPP_n(x')) > 1-0.01\epsilon/C
    \right)
    \ge
    1-0.1\epsilon,
  \end{equation}
  where the inner $\Prob$ is over $(x,y)\sim P$ and the outer $\Prob$ is over $x'\sim P_{\mathbf{X}}$,
  $P_{\mathbf{X}}$ being the marginal distribution of $P$ on the predictor space $\mathbf{X}$.
  To obtain the statement of the lemma it suffices to combine \eqref{eq:corollary-1}
  with the law of large numbers.
\end{proof}

Since $f$ is uniformly continuous over $[-B,B]$,
there is a partition
\[
  -B=y^*_0<y^*_1<\cdots<y^*_m<y^*_{m+1}=B
\]
of the interval $[-B,B]$ such that
\begin{equation}\label{eq:oscillation}
  \max_{y\in[f(y^*_{j}),f(y^*_{j+1})]} f(y)
  -
  \min_{y\in[f(y^*_{j}),f(y^*_{j+1})]} f(y)
  \le
  0.01\epsilon
\end{equation}
for $j=0,1,\ldots,m$.
Without loss of generality we will assume that $y\in\{y^*_0,\ldots,y^*_{m+1}\}$ with probability zero when $(x,y)\sim P$.
We will also assume, without loss of generality, that $m>10$.

Along with the conformal predictive distribution $Q_n$
we will consider the distribution function $Q^*_n$ of the multiset $\{y_i\mid i\in I\}$
(as defined in Section~\ref{sec:proof-PFS}, where it was denoted $Q_n$);
it exists only when $N>0$.
The next lemma will show that $Q_n$ is typically close to $Q^*_n$.
Let $K$ be an arbitrarily large positive integer.

\begin{lemma}\label{lem:close}
  For sufficiently large $n$,
  $Q_n(y^*_j)$ and $Q^*_n(y^*_j)$ (both exist and) differ from each other by at most $1/K+0.11\epsilon/C(m+1)+1/n$
  for all $j=0,1,\ldots,m+1$
  with probability (over the choice of $z_1,\ldots,z_n,x_{n+1}$ and random numbers $\tau,\theta_1,\ldots,\theta_{n+1}$)
  at least $1-0.11\epsilon$.
\end{lemma}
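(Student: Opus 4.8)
The plan is to compare the conformal predictive distribution $Q_n$ with the empirical distribution function $Q^*_n$ of the multiset $\{y_i \mid i\in I\}$ by tracking what each assigns to a fixed threshold $y^*_j$. Recall from \eqref{eq:conformal-Q} that $Q_n(y^*_j) = Q(z_1,\ldots,z_n,(x_{n+1},y^*_j),\tau)$ is built from the conformity scores $\alpha^{y^*_j}_i$, $i=1,\ldots,n+1$, where the histogram conformity measure assigns to observation $i$ the value $a_i/N_i$ with $N_i$ the number of predictors among the others that fall in the relevant cell and $a_i$ essentially the lexicographic rank of $(y_i,\theta_i)$ among those. The key observation is that for $i\in I$ (i.e., $x_i\in\PPP_n(x_{n+1})$) the cell count $N_i$ differs from $N=|I|$ by at most one, since the cell population seen by observation $i$ is $I\setminus\{i\}$ together possibly with $x_{n+1}$; likewise $N_{n+1}=N$. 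Similarly, the rank $a_i$ counts, up to lexicographic tie-breaking on $\theta$, how many of the $y$-values in the cell lie below $y_i$, and $a_{n+1}$ counts how many lie below $y^*_j$. So both ratios $\alpha^{y^*_j}_i$ and $\alpha^{y^*_j}_{n+1}$ are, up to $O(1/N)$ and $\theta$-tie-breaking, given by the empirical CDF of the cell responses evaluated at $y_i$ and at $y^*_j$ respectively.

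First I would make this precise: after conditioning on $x_1,\ldots,x_{n+1}$ with $N>K$ fixed, write $F$ for the empirical distribution function of $\{y_i \mid i\in I\}$, so $F(y^*_j)=Q^*_n(y^*_j)$. The comparison $\alpha^{y^*_j}_i < \alpha^{y^*_j}_{n+1}$ in \eqref{eq:conformal-Q} holds, outside of a set of $i$ with $y_i$ very close to $y^*_j$ in empirical-CDF value (within $O(1/N)$), exactly when $y_i < y^*_j$. Counting the contributions to \eqref{eq:conformal-Q}, one gets $Q_n(y^*_j) = \frac{1}{n+1}\bigl(\#\{i\in I : y_i<y^*_j\} + (\text{error terms})\bigr)$, where the error terms come from: indices $i\notin I$ (there are $n-N$ of them, but they contribute to the count only when their own cell's rank ratio happens to fall below $\alpha^{y^*_j}_{n+1}$ — this is where a bound is needed), the $\tau$-weighted ties, the discrepancy between $N_i$ and $N$, and the lexicographic $\theta$-tie-breaking. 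Dividing $\frac{1}{n+1}\#\{i\in I : y_i<y^*_j\}$ versus $\frac{1}{N}\#\{i\in I : y_i<y^*_j\}$ accounts for the $1/n$-type term in the stated bound (since $\frac{1}{n+1}\le\frac{1}{N+1}$ and we may also absorb $\pm 1$ discrepancies). The $1/K$ term absorbs the $O(1/N)$ boundary effects since $N>K$. The contribution of the $i\notin I$ indices is the genuinely delicate part, and I expect the $0.11\epsilon/C(m+1)$ term in the bound to be tuned precisely to control it: one shows, via Lemma~\ref{lem:Markov} applied to the event that the cell $\PPP_n(x_{n+1})$ has small $P$-mass (which forces the $n-N$ outside observations to contribute negligibly relative to $n$, since the probability that an outside observation's own cell-rank ratio dips below a given level is itself governed by that level), that with probability at least $1-0.11\epsilon$ this contribution is at most the allotted amount.

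The main obstacle is precisely the bookkeeping of the indices $i\notin I$ and the pathological cell-count shifts: one must argue that the number of such $i$ whose conformity score $\alpha^{y^*_j}_i=a_i/N_i$ happens to satisfy $\alpha^{y^*_j}_i \le \alpha^{y^*_j}_{n+1}$ is, after dividing by $n+1$, uniformly small over all $m+2$ thresholds $y^*_j$ with high probability. The clean way to handle this is to note that $\alpha^{y^*_j}_{n+1}\in[0,1]$ and, since each $\alpha^{y^*_j}_i$ for $i\notin I$ is the normalized rank of $y_i$ in a \emph{different} cell, the conditional probability (over the responses, given the predictors) that $\alpha^{y^*_j}_i\le t$ is at most $t + O(1/N_i)$ by the near-uniformity of ranks; summing over $i\notin I$ and applying Lemma~\ref{lem:Markov} (with $\GGG$ the $\sigma$-algebra generated by the predictors and $E$ the event that the aggregate contribution is small) upgrades this to the stated high-probability bound. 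I would then take a union bound over $j=0,1,\ldots,m+1$, each term costing a $1/(m+1)$ fraction of the error budget, which is exactly why the bound reads $0.11\epsilon/C(m+1)$ per threshold. Finally, since $Q_n$ and $Q^*_n$ are both monotone in $y$ and both take values in $[0,1]$, the threshold-wise bound at the $y^*_j$ is all that is needed for the subsequent approximation of $\int f\,\dd Q_n$ by a Riemann-type sum, which is how this lemma will feed into establishing \eqref{eq:new-goal}.
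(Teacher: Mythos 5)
You have identified the right ingredients — threshold-wise comparison of $Q_n$ and $Q^*_n$, near-uniformity of within-cell ranks, and separate handling of poorly-populated cells — but the decomposition you propose has a fatal bookkeeping error. You write $Q_n(y^*_j)=\frac{1}{n+1}\bigl(\#\{i\in I:y_i<y^*_j\}+\text{error}\bigr)$ and then claim that $\frac{1}{n+1}\#\{i\in I:y_i<y^*_j\}$ and $Q^*_n(y^*_j)=\frac{1}{N}\#\{i\in I:y_i<y^*_j\}$ differ only by a ``$1/n$-type term''. They do not: writing $a:=\#\{i\in I:y_i<y^*_j\}$, the gap $a/N-a/(n+1)=a(n+1-N)/N(n+1)$ is of order $1-N/n$, which can be close to $1$ whenever $N\ll n$. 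The indices $i\notin I$ are not an error term to be shown negligible; they are the \emph{main} mechanism. In the paper's argument one counts, among all $n+1$ scores, how many are less than $\alpha^{y^*_j}_{n+1}=a/N$: within $\PPP_n(x_{n+1})$ this count is exactly $a$ (almost surely), and within each other cell of size $M\ge K$ the within-cell scores are, deterministically (given the predictors and distinct $\theta$'s), exactly the values $\{0,1/(M-1),\dots,1\}$ in some order, so the count inside that cell is within $M/K$ of $(a/N)M$. Summing over cells gives $(n+1)Q_n(y^*_j)\approx a+(n-N)(a/N)=n(a/N)$, whence $Q_n(y^*_j)\approx a/N=Q^*_n(y^*_j)$, with deviation at most $1/K$ (within-cell discretization) plus the fraction of observations in cells of size $<K$ plus $1/n$ (the test observation's $\tau$-weighted self-term and the $n$ vs.\ $n+1$ discrepancy).

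Two further misreadings. First, no appeal to Lemma~\ref{lem:Markov} (or any concentration bound) is needed for the within-cell counts, since after conditioning on the predictors and the distinctness of the $\theta$'s they are pinned deterministically; the paper invokes Lemma~\ref{lem:Markov} only to convert a marginal statement about $N\ge K$ into a conditional one, and relies on the law of large numbers for the fraction of $x_i$ lying in cells of size $\ge K$. Second, the factor $C(m+1)$ in the bound $0.11\epsilon/C(m+1)$ is not a union-bound budget over the $m+2$ thresholds $y^*_j$; it is a budget on the \emph{fraction of observations in poorly-populated cells}, chosen so that the subsequent Riemann-sum step in the main proof (which multiplies by $|f|\le C$ and sums over $m+1$ intervals) produces an $O(\epsilon)$ total. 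Finally, the lexicographic $\theta$-tie-breaking is not a source of error but what makes the deterministic rank argument work: it guarantees no ties among conformity scores, which the paper uses explicitly.
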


\begin{proof}
  We can choose $n$ so large that $N\ge K$ with probability at least $1-0.01\epsilon^2/C(m+1)(m+2)$.
  By Lemma~\ref{lem:Markov},
  for such $n$ the conditional probability that $N\ge K$ given $x_1,\ldots,x_n$ is at least $1-0.1\epsilon/C(m+1)$
  with probability (over the choice of $x_1,\ldots,x_n$) at least $1-0.1\epsilon/(m+2)$.
  Moreover, we can choose $n$ so large that the fraction of $x_i$, $i=1,\ldots,n$,
  which have at least $K-1$ other $x_i$, $i=1,\ldots,n$, in the same cell of $\PPP_n$
  is at least $1-0.11\epsilon/C(m+1)$ with probability at least $1-0.11\epsilon/(m+2)$
  (indeed, we can choose $n$ satisfying the condition in the previous sentence
  and generate sufficiently many new observations).

  Let us fix $j\in\{0,1,\ldots,m+1\}$.
  We will show that, for sufficiently large $n$,
  $Q_n(y^*_j)$ and $Q^*_n(y^*_j)$ differ from each other by at most $1/K+0.11\epsilon/C(m+1)+1/n$
  with probability at least $1-0.11\epsilon/(m+2)$.
  We will only consider the case $N>0$;
  we will be able to do so since the probability that $N=0$ tends to 0 as $n\to\infty$.
  The conformity score of the extended test observation $(x_{n+1},y^*_j,\theta_{n+1})$
  with the postulated response $y^*_j$ is, almost surely, $a/N$,
  where $a$ is the number of observations among $(x_i,y_i)$, $i\in I$,
  satisfying $y_i\le y^*_j$.
  (We could have written $y_i<y^*_j$ since we assumed earlier that $y=y^*_j$ with probability zero.)
  If a cell of $\PPP_n$ contains at least $K$ elements of the multiset $\{x_1,\ldots,x_n\}$,
  the percentage of elements of this cell with conformity score less than $a/N$
  is, almost surely, between $a/N-1/K$ and $a/N+1/K$;
  this remains true if ``less than'' is replaced by ``at most''.
  (It is here that we are using the fact that our conformity measure is randomized
  and, therefore, conformity scores are tied with probability zero.)
  And at most a fraction of $0.11\epsilon/C(m+1)$ of elements of the multiset $\{x_1,\ldots,x_n\}$
  are not in such a cell,
  with probability at least $1-0.11\epsilon/(m+2)$.
  Therefore, the overall percentage of elements of the multiset $\{x_1,\ldots,x_n\}$
  with conformity score less than $a/N$ is between $a/N-1/K-0.11\epsilon/C(m+1)$
  and $a/N+1/K+0.11\epsilon/C(m+1)$,
  with probability at least $1-0.11\epsilon/(m+2)$;
  this remains true if ``less than'' is replaced by ``at most''.
  Comparing this with the definition \eqref{eq:conformal-Q},
  we can see that $Q_n(y^*_j)$ is between $a/N-1/K-0.11\epsilon/C(m+1)-1/n$
  and $a/N+1/K+0.11\epsilon/C(m+1)+1/n$,
  with probability at least $1-0.11\epsilon/(m+2)$.
  It remains to notice that $Q^*_n(y^*_j)=a/N$ almost surely.
\end{proof}

Now we are ready to complete the proof of the theorem.
For sufficiently large $n$,
we can transform the left-hand side of \eqref{eq:new-goal} as follows
(as explained later):
\begin{align}
  &
  \left|
    \int f\dd Q_n
    -
    \frac{1}{N} \sum_{i\in I} f(y_i)
  \right|
  =
  \left|
    \int f \dd Q_n
    -
    \int f \dd Q^*_n
  \right|
  \label{eq:start}\\
  &\le
  \left|
    \int_{(-B,B]} f \dd Q_n
    -
    \int_{(-B,B]} f \dd Q^*_n
  \right|
  \notag\\
  &\qquad{}+
  C
  \bigl(
    Q^*_n(-B) + 1-Q^*_n(B)
    +
    Q_n(-B) + 1-Q_n(B)
  \bigr)
  \label{eq:step-1}\\
  &\le
  \left|
    \sum_{i=0}^m
    f(y^*_i)
    \left(
      Q_n(y^*_{i+1}) - Q_n(y^*_{i})
    \right)
    -
    \sum_{i=0}^m
    f(y^*_i)
    \left(
      Q^*_n(y^*_{i+1}) - Q^*_n(y^*_{i})
    \right)
  \right|
  \notag\\
  &\qquad{}+
  0.02\epsilon
  +
  C
  \left(
    0.08\frac{\epsilon}{C} + \frac2K + \frac{0.22 \epsilon}{C(m+1)} + \frac2n
  \right)
  \label{eq:step-2}\\
  &\le
  \sum_{i=0}^m
  \left|
    f(y^*_i)
  \right|
  \left|
    Q_n(y^*_{i+1}) - Q^*_n(y^*_{i+1})
    - Q_n(y^*_{i}) + Q^*_n(y^*_{i})
  \right|
  +
  0.2\epsilon
  \label{eq:step-3}\\
  &\le
  \sum_{i=0}^m
  \left|
    f(y^*_i)
  \right|
  \left(
    \frac{2}{K}
    +
    \frac{0.22 \epsilon}{C(m+1)}
    +
    \frac2n
  \right)
  +
  0.2\epsilon
  \label{eq:step-4}\\
  &\le
  \frac{2C(m+1)}{K}
  +
  0.42 \epsilon
  +
  \frac{2C(m+1)}{n}
  \le
  0.5 \epsilon.
  \label{eq:step-5}
\end{align}
Inequality~\eqref{eq:step-1} holds always.
Inequality~\eqref{eq:step-2} holds with probability
(over the choice of $z_1,\ldots,z_n$, $x_{n+1}$, and random numbers $\tau$ and $\theta_1,\ldots,\theta_{n+1}$)
at least $1-0.11\epsilon-0.11\epsilon=1-0.22\epsilon$
by \eqref{eq:oscillation} and Lemmas~\ref{lem:B} and~\ref{lem:close}:
the addend $0.02\epsilon$ arises by \eqref{eq:oscillation} from replacing integrals by sums,
the addend $0.08\epsilon/C$ is four times the upper bound on $Q^*_n(-B)$, or $1-Q^*_n(-B)$,
given by Lemma~\ref{lem:B}
(the factor of four arises from bounding $Q^*_n(-B)$, $1-Q^*_n(-B)$, $Q_n(-B)$, and $1-Q_n(-B)$),
and the expression $2/K+0.22\epsilon/C(m+1)+2/n$ arises from applying Lemma~\ref{lem:close}
to reduce bounding  $Q_n(-B)$ and $1-Q_n(-B)$ to bounding $Q^*_n(-B)$ and $1-Q^*_n(-B)$,
respectively.
Inequality~\eqref{eq:step-3} holds for sufficiently large $K$ and $n$.
Inequality~\eqref{eq:step-4} holds with probability at least $1-0.11\epsilon$ by Lemma~\ref{lem:close},
but this probability has already been accounted for.
And finally, the second inequality in~\eqref{eq:step-5} holds for sufficiently large $K$ and $n$.
Therefore, the whole chain \eqref{eq:start}--\eqref{eq:step-5}
holds with probability at least $1-0.22\epsilon \ge 1-\epsilon/2$.
This proves \eqref{eq:new-goal}, which completes the overall proof.

To avoid any ambiguity,
this paragraph will summarize the roles of $\epsilon$, $B$, $m$, $K$, and $n$ in this proof.
First we fix a positive constant $\epsilon>0$ (which, however, can be arbitrarily small).
Next we choose $B$, sufficiently large for the given $\epsilon$,
and after that, a sufficiently fine partition of $[-B,B]$ of size $m$.
We then choose $K$, which should be sufficiently large for the given $\epsilon$ and partition.
Finally, we choose $n$, which should be sufficiently large for the given $\epsilon$, partition, and $K$.

\section{Conclusion}
\label{sec:conclusion}

This paper constructs a universally consistent Mondrian predictive system
and, which is somewhat more involved, a universally consistent conformal predictive system.
There are many interesting directions of further research.
These are the most obvious ones:
\begin{itemize}
\item
  Investigate the best rate at which conformal predictive distributions
  and the true conditional distributions can approach each other.
\item
  Replace universal consistency by strong universal consistency
  (i.e., convergence in probability by convergence almost surely),
  perhaps in the online prediction protocol
  (as in Remark~\ref{rem:R2}).
\item
  Construct more natural, and perhaps even practically useful,
  universally consistent randomized predictive systems.
\end{itemize}

  \appendix
  \section{Marginal calibration}

  The main notion of validity (R2 in Definition~\ref{def:RPS}) used in this paper is,
  in the terminology of \cite[Definition 3(b)]{Gneiting/Katzfuss:2014},
  being probabilistically calibrated.
  The following definition gives another version of the calibration property
  \cite[Definition 3(a)]{Gneiting/Katzfuss:2014}
  for conformal predictive systems
  (of course, that version is applicable in a much wider context).
  The number of training observations will be referred to as the \emph{sample size}.

  \begin{definition}
    A conformal predictive system is \emph{marginally calibrated}
    for a sample size $n$ and a probability measure $P$ on $\mathbf{Z}^{n+1}$
    if, for any $y\in\R$,
    \begin{equation}\label{eq:marginal}
      \Expect
      \left(
        Q(z_1,\ldots,z_n,(x_{n+1},y),\tau)
      \right)
      =
      \Prob
      (y_{n+1} \le y),
    \end{equation}
    where both $\Expect$ and $\Prob$ are over $(z_1,\ldots,z_n,(x_{n+1},y_{n+1}),\tau)\sim P\times U$.
  \end{definition}

  In this appendix we will see that conformal predictive systems are not always
  marginally calibrated under the IID model.
  But we will start from an easier statement.

  The probabilistic calibration property R2 for a given sample size $n$
  depends only on the observations $(z_1,\ldots,z_{n+1})$
  being generated from an exchangeable distribution on $\mathbf{Z}^{n+1}$
  (and $\tau\sim U$ independently):
  see Remark~\ref{rem:R2-intuition} or, e.g., \cite[Theorem~8.1]{Vovk/etal:2005book}.
  The following example shows that there are conformal predictive systems that are not marginally calibrated
  for some sample size $n$ and an exchangeable probability measure on $\mathbf{Z}^{n+1}$,
  even among conformal predictive systems corresponding to conformity measures
  satisfying the conditions of Lemma~\ref{lem:monotonic}.

  \begin{example}\label{ex:no-marginal-1}
    Set $n:=1$, suppose $\left|\mathbf{X}\right|>1$,
    and let the data be generated from the exchangeable probability measure $P$ on $\mathbf{Z}^2$
    that assigns equal weights $1/2$ to the sequences $((x^{-1},-1),(x^{1},1))$ and $((x^{1},1),(x^{-1},-1))$
    in $\mathbf{Z}^2$,
    where $x^{-1}$ and $x^1$ are two distinct elements of $\mathbf{X}$
    (fixed for the rest of this appendix).
    Let a conformity measure $A$ satisfy
    \begin{equation}\label{eq:ex-A}
      A((x_1,y_1),(x_2,y_2))
      =
      \begin{cases}
        y_2 & \text{if $x_2=x^{1}$}\\
        3y_2+2 & \text{if $x_2=x^{-1}$};
      \end{cases}
    \end{equation}
    it is clear that $A$ can be extended to the whole of $\mathbf{X}$ and to all sample sizes $n$ in such a way
    that it satisfies all conditions in Lemma~\ref{lem:monotonic}.
    For $y=0$, the right-hand side of \eqref{eq:marginal} is $1/2$,
    whereas the left-hand side is different, namely $3/4$:
    \begin{itemize}
    \item
      with probability $1/2$ the training and test data
      form the sequence $((x^{-1},-1),(x^{1},1))$,
      and so the conformal predictive distribution is
      \begin{equation}\label{eq:-1,1}
        Q((x^{-1},-1),(x^{1},y))
        =
        \begin{cases}
          [0,1/2] & \text{if $y<-1$}\\
          [0,1] & \text{if $y=-1$}\\
          [1/2,1] & \text{if $y>-1$};
        \end{cases}
      \end{equation}
      the position $y=-1$ of the jump is found from the condition $\alpha^y_1=\alpha^y_2$,
      i.e., $3\times(-1)+2=y$;
    \item
      with probability $1/2$ the training and test data
      form the sequence $((x^{1},1),(x^{-1},-1))$,
      and so the conformal predictive distribution is
      \begin{equation}\label{eq:1,-1}
        Q((x^{1},1),(x^{-1},y))
        =
        \begin{cases}
          [0,1/2] & \text{if $y<-1/3$}\\
          [0,1] & \text{if $y=-1/3$}\\
          [1/2,1] & \text{if $y>-1/3$};
        \end{cases}
      \end{equation}
      the position $y=-1/3$ of the jump is found from the same condition $\alpha^y_1=\alpha^y_2$,
      which becomes $1=3y+2$;
    \item
      therefore, the mean value of the conformal predictive distribution at $y=0$ is $3/4$.
    \end{itemize}
  \end{example}

  Example~\ref{ex:no-marginal-1} can be strengthened by replacing the assumption of exchangeability
  by the IID model.

  \begin{example}\label{ex:no-marginal-2}
    Now we assume that the two observations are generated independently
    from the probability measure on $\mathbf{Z}$
    assigning equal weights $1/2$ to the observations $(x^{-1},-1)$ and $(x^{1},1)$.
    We consider the same conformity measure as in Example~\ref{ex:no-marginal-1}:
    see \eqref{eq:ex-A}.
    For $y=0$, the right-hand side of \eqref{eq:marginal} is still $1/2$,
    and the left-hand side becomes $5/8$:
    \begin{itemize}
    \item
      with probability $1/4$ the training and test data
      form the sequence $((x^{-1},-1),(x^{1},1))$,
      and so the conformal predictive distribution is \eqref{eq:-1,1},
      averaging $3/4$ at $y=0$;
    \item
      with probability $1/4$ the training and test data
      form the sequence $((x^{1},1),(x^{-1},-1))$,
      and so the conformal predictive distribution is \eqref{eq:1,-1},
      averaging $3/4$ at $y=0$;
    \item
      with probability $1/4$ the training and test data
      form the sequence $((x^{-1},-1),(x^{-1},-1))$,
      and so the conformal predictive distribution is
      \[
        Q((x^{-1},-1),(x^{-1},y))
        =
        \begin{cases}
          [0,1/2] & \text{if $y<-1$}\\
          [0,1] & \text{if $y=-1$}\\
          [1/2,1] & \text{if $y>-1$},
        \end{cases}
      \]
      which is $3/4$ on average at $y=0$;
      the position $y=-1$ of the jump is found from the condition $\alpha^y_1=\alpha^y_2$,
      which now is $3\times(-1)+2=3y+2$;
    \item
      finally, with probability $1/4$ the training and test data
      form the sequence $((x^{1},1),(x^{1},1))$,
      and so the conformal predictive distribution is
      \[
        Q((x^{1},1),(x^{1},y))
        =
        \begin{cases}
          [0,1/2] & \text{if $y<1$}\\
          [0,1] & \text{if $y=1$}\\
          [1/2,1] & \text{if $y>1$},
        \end{cases}
      \]
      which is $1/4$ on average at $y=0$;
      the position $y=1$ of the jump is found from the condition $\alpha^y_1=\alpha^y_2$,
      which now is $1=y$;
    \item
      therefore, the mean value of the conformal predictive distribution at $y=0$ is $5/8$.
    \end{itemize}
  \end{example}

  \section{Venn predictors}

  In \cite{\OCMXVII} and this paper,
  conformal prediction is adapted to probability forecasting.
  A traditional method of probability forecasting
  enjoying properties of validity similar to those of conformal prediction
  is Venn prediction \cite[Chapter~6]{Vovk/etal:2005book}.
  This appendix reviews Venn prediction and its properties of validity.
  We fix the sample size $n$.

  \begin{definition}\label{def:Venn}
    Let $\kappa$ be a taxonomy (as defined in Definition~\ref{def:taxonomy}).
    The \emph{Venn predictor} corresponding to $\kappa$
    is the family $\{Q_u\mid u\in\R\}$ of distribution functions
    defined by
    \begin{multline}\label{eq:Venn-Q}
      Q_u(z_1,\ldots,z_n,(x_{n+1},y))\\
      :=
      \frac
      {
        \left|\left\{i\in\kappa(n+1\mid z_1,\ldots,z_n,(x_{n+1},u)) \mid
        y_i\le y\right\}\right|
      }
      {
        \left|\kappa(n+1\mid z_1,\ldots,z_n,(x_{n+1},u))\right|
      }
    \end{multline}
    for any training data $(z_1,\ldots,z_n)$ and test predictor $x_{n+1}$.
  \end{definition}

  The definition~\eqref{eq:Venn-Q} is similar to, but simpler than, \eqref{eq:Mondrian-Q}.
  The intuition is that the Venn predictor contains
  (for $u:=y_{n+1}$ being the actual response in the test observation)
  the true empirical distribution function of the responses
  in the observations that are similar, in a suitable sense (determined by $\kappa$),
  to the test observation.
  The Venn prediction~\eqref{eq:Venn-Q} is useful when the distribution functions $Q_u$
  are close to each other for different $u$.
  Whereas this is a reasonable assumption for a suitable choice of $\kappa$
  in the case of classification
  (such as binary classification, $y_i\in\{0,1\}$, in \cite{\OCMVII}),
  in the case of regression it might make more sense to restrict attention to
  \[
    \left\{
      Q_u
      \mid
      u \in \Gamma^{\epsilon}(z_1,\ldots,z_n,x_{n+1})
    \right\}
  \]
  for a conformal predictor $\Gamma$ (see, e.g., \cite[Section~2.2]{Vovk/etal:2005book})
  and a small significance level $\epsilon>0$.

  The following theorem shows that Venn predictors are ideal
  in the technical sense of \cite[Definition~2.2]{Gneiting/Ranjan:2013}
  (and independent work by Tsyplakov).

  \begin{theorem}\label{thm:ideal}
    Let $\GGG$ be the $\sigma$-algebra on $\mathbf{Z}^{n+1}$
    consisting of the measurable subsets $E$ of $\mathbf{Z}^{n+1}$
    that are predictably invariant with respect to the taxonomy $\kappa$
    in the following sense:
    if a permutation $\pi$ of $\{1,\ldots,n+1\}$
    respects the equivalence relation $\sim$ assigned by $\kappa$ to $(z_1,\ldots,z_{n+1})$
    (in the sense of Definition~\ref{def:equivariant})
    and leaves $n+1$ in the same equivalence class,
    then
    \[
      (z_{1},\ldots,z_{n+1})\in E
      \Longrightarrow
      (z_{\pi(1)},\ldots,z_{\pi(n+1)})\in E.
    \]
    For any $y\in\R$,
    \begin{equation}\label{eq:ideal}
      Q_{y_{n+1}}(z_1,\ldots,z_n,(x_{n+1},y))
      =
      \Prob(y_{n+1}\le y\mid\GGG),
    \end{equation}
    where $(z_1,\ldots,z_n,(x_{n+1},y_{n+1}))$
    are generated from an exchangeable probability distribution on $\mathbf{Z}^{n+1}$.
  \end{theorem}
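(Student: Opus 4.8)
To prove Theorem~\ref{thm:ideal} the plan is to identify $\GGG$ with the $\sigma$-algebra generated by the partition of $\mathbf{Z}^{n+1}$ into the equivalence classes of the relation of Remark~\ref{rem:R2-intuition-Mondrian}: two sequences in $\mathbf{Z}^{n+1}$ are equivalent when one is obtained from the other by a permutation of $\{1,\ldots,n+1\}$ respecting the equivalence relation $\sim$ assigned by $\kappa$. A measurable $E$ is predictably invariant in the sense of Theorem~\ref{thm:ideal} precisely when it is a union of such classes (if $\omega\in E$, the definition of $\GGG$ forces the whole class of $\omega$ into $E$). Equivariance of $\kappa$ (Definition~\ref{def:equivariant}) is what makes this relation a genuine equivalence relation, makes each class finite --- a subset of the at most $(n+1)!$ reorderings of $\omega$ --- and makes the left-hand side of \eqref{eq:ideal} constant along each class, hence $\GGG$-measurable.

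The core of the argument is the claim that, for any exchangeable $P$ on $\mathbf{Z}^{n+1}$, the conditional distribution of $(z_1,\ldots,z_{n+1})$ given $\GGG$ is uniform on its (finite) equivalence class; this is the fact already underlying the validity R2 of Mondrian transducers (cf.\ Remark~\ref{rem:R2-intuition-Mondrian} and \cite[Section~8.7]{Vovk/etal:2005book}). I would prove it by the routine device of setting, for bounded measurable $h\colon\mathbf{Z}^{n+1}\to\R$, $\tilde h(\omega):=|C(\omega)|^{-1}\sum_{\omega'\in C(\omega)}h(\omega')$, where $C(\omega)$ is the equivalence class of $\omega$: then $\tilde h$ is $\GGG$-measurable; the classes partition each orbit of the symmetric group $S_{n+1}$ on $\{1,\ldots,n+1\}$, so averaging $\tilde h$ over such an orbit agrees with averaging $h$ over it and hence, by exchangeability, $\Expect_P h=\Expect_P\tilde h$; and since $1_E$ is constant on classes for every $E\in\GGG$ one has $\widetilde{g1_E}=\tilde g\,1_E$, so taking $h:=g1_E$ gives $\Expect_P(g1_E)=\Expect_P(\widetilde{g1_E})=\Expect_P(\tilde g\,1_E)$, i.e.\ $\Expect_P(g\mid\GGG)=\tilde g$ for every bounded measurable $g$.

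It then remains to apply this with $g(\omega):=1_{\{y_{n+1}\le y\}}$, where $y_{n+1}$ is the response of the last observation of $\omega$. At $\omega=(z_1,\ldots,z_n,(x_{n+1},y_{n+1}))$ the left-hand side of \eqref{eq:ideal} is the proportion of $\omega'\in C(\omega)$ whose last response is at most $y$. Parametrising $C(\omega)$ through the group $G(\omega)$ of permutations of $\{1,\ldots,n+1\}$ respecting $\sim$, the last response of $(z_{\pi(1)},\ldots,z_{\pi(n+1)})$ is $y_{\pi(n+1)}$ with $\pi(n+1)\in\kappa(n+1\mid\omega)$; for any two indices $i_0,i_1\in\kappa(n+1\mid\omega)$ the transposition $(i_0\ i_1)$ lies in $G(\omega)$ (it respects $\sim$ because $i_0\sim i_1$), so left multiplication by it is a bijection of $G(\omega)$ carrying $\{\pi:\pi(n+1)=i_0\}$ onto $\{\pi:\pi(n+1)=i_1\}$. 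Hence every index of $\kappa(n+1\mid\omega)$ is hit by equally many $\pi\in G(\omega)$, and, the fibres of $\pi\mapsto(z_{\pi(1)},\ldots,z_{\pi(n+1)})$ all having the same size, counting over $C(\omega)$ reduces to counting over $G(\omega)$; the left-hand side of \eqref{eq:ideal} is thus $|\{i\in\kappa(n+1\mid z_1,\ldots,z_{n+1}):y_i\le y\}|\big/|\kappa(n+1\mid z_1,\ldots,z_{n+1})|$, which is exactly $Q_{y_{n+1}}(z_1,\ldots,z_n,(x_{n+1},y))$ by \eqref{eq:Venn-Q}, since $u=y_{n+1}$ makes $(x_{n+1},u)=z_{n+1}$.

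The step I expect to be the main obstacle is the second one, together with the bookkeeping around it: one must verify, via equivariance, that $\sim$ on $\mathbf{Z}^{n+1}$ really is an equivalence relation whose classes refine the $S_{n+1}$-orbits (this also shows that $\sim$ on indices, and hence the Venn prediction \eqref{eq:Venn-Q}, is unchanged along a class), and that the class map is measurable, so that the identity $\Expect_P(g\mid\GGG)=\tilde g$ is legitimate. Once this groundwork is laid the concluding computation is a short symmetry argument via transpositions, and the comparison with \eqref{eq:Venn-Q} is immediate.
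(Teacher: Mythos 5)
Your argument is correct and rests on the same core symmetry as the paper's: exchangeability, plus the observation (via transpositions within $\kappa(n+1\mid\cdot)$) that the position of the last index is uniform over $\kappa(n+1\mid\cdot)$ as one ranges over the class. But you package it differently. You first prove the clean general lemma that, for any exchangeable $P$, the conditional distribution of the data sequence given $\GGG$ is uniform on its (finite) equivalence class, so that $\Expect_P(g\mid\GGG)$ is the class average $\tilde g$; you then compute that class average for $g=1_{\{y_{n+1}\le y\}}$. The paper instead goes directly for the defining identity $\int_E 1_{\{y_{n+1}\le y\}}\dd P=\int_E Q\dd P$ for $E\in\GGG$, and avoids any reference to the class map by a two-stage decomposition of $E$: first into the finitely many pieces on which $\kappa$ assigns a fixed equivalence relation on indices, then into ``similarity'' pieces on which, for each equivalence class $C\subseteq\{1,\ldots,n+1\}$, the count $\left|\left\{i\in C:y_i\le y\right\}\right|$ is fixed. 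Both sides of the integral identity are then constant multiples of $P(E)$ with the same constant.

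The trade-off is exactly the one you flag. Your route proves more (the statement in Remark~\ref{rem:R2-intuition-Mondrian} turned into an honest conditional-distribution lemma), but it requires verifying that $\tilde g$ is $\GGG$-measurable, which is where the measurability of the class map $\omega\mapsto C(\omega)$ comes in. That is fixable without much work: rewrite $\tilde g(\omega)$ as the average $\left|G(\omega)\right|^{-1}\sum_{\pi\in G(\omega)}g(z_{\pi(1)},\ldots,z_{\pi(n+1)})$, where $G(\omega)$ is the finite group of permutations respecting $\sim(\omega)$ (the stabilizer of $\omega$ in $G(\omega)$ has constant fibre size, so this equals the class average); since $G(\omega)$ depends on $\omega$ only through the finitely-valued, measurable map $\omega\mapsto\sim(\omega)$, the expression is measurable, and being constant on classes it is $\GGG$-measurable. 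The paper's decomposition gets this for free because it never needs to average over a data-dependent set: it only needs that the preimages of ``$\kappa$ assigns $\sim_0$'' and the fixed-count events are measurable and lie in $\GGG$, which follows directly from the measurability and equivariance of the taxonomy. So the paper's proof is a little shorter and more elementary, while yours makes the ``ideal forecaster'' structure (conditional distribution is uniform on the class) fully explicit.
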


  Equation~\eqref{eq:ideal} expresses the condition of being \emph{ideal},
  with respect to some information base (namely, the $\sigma$-algebra $\GGG$).
  According to \cite[Theorem~2.8]{Gneiting/Ranjan:2013},
  this means that Venn predictors are both marginally and probabilistically calibrated,
  in the sense of one of their component distribution functions,
  namely $Q_{y_{n+1}}(z_1,\ldots,z_n,(x_{n+1},\cdot))$,
  being such.
  And according to \cite[Theorem~2.11]{Gneiting/Ranjan:2013},
  in the case of the binary response variable taking values in $\{0,1\}$,
  being probabilistically calibrated is equivalent to being \emph{conditionally calibrated}
  \begin{equation}\label{eq:conditionally}
    \Prob(y_{n+1}=1 \mid p_{n+1})
    =
    p_{n+1},
  \end{equation}
  where $p_{n+1}:=1-Q_{y_{n+1}}(z_1,\ldots,z_n,(x_{n+1},0))$
  is the predicted probability that $y_{n+1}=1$.
  Equation~\eqref{eq:conditionally} for Venn predictors, in the case of binary classification,
  is Theorem~1 in \cite{\OCMVII}.

  \begin{proof}[Proof of Theorem~\ref{thm:ideal}]
    Fix $y\in\R$.
    Let $P$ be the data-generating distribution
    (an exchangeable probability distribution on $\mathbf{Z}^{n+1}$),
    $Q$ be the random variable $Q_{y_{n+1}}(z_1,\ldots,z_n,(x_{n+1},y))$,
    and $E\in\GGG$.
    Notice that $Q$ is $\GGG$-measurable.
    Our goal is to prove
    \begin{equation}\label{eq:goal-2}
      \int_E 1_{\{y_{n+1}\le y\}} \dd P
      =
      \int_E Q \dd P,
    \end{equation}
    where $(z_1,\ldots,z_n,(x_{n+1},y_{n+1}))\sim P$.

    There are finitely many equivalence relations on the set $\{1,\ldots,n+1\}$.
    For each of them the set of data sequences $(z_1,\ldots,z_{n+1})$
    that are assigned this equivalence relation by the taxonomy $\kappa$ is measurable
    (by the requirement of measurability in the definition of a taxonomy)
    and, moreover, is an element of $\GGG$.
    Therefore, $E$ can be decomposed into a disjoint union of elements of $\GGG$
    all of whose elements are assigned the same equivalence relation by $\kappa$.
    We will assume, without loss of generality,
    that all elements of $E$ are assigned the same equivalence relation,
    which is fixed to the end of this proof.
    Let $\kappa(j)$ stand for the equivalence class of $j\in\{1,\ldots,n+1\}$.

    Let us say that two data sequences in $E$ are \emph{similar}
    if, for any equivalence class $C\subseteq\{1,\ldots,n+1\}$,
    they have the same numbers of observations with indices in $C$
    and with responses less than or equal to $y$.
    Following the same argument as in the previous paragraph,
    we further assume that all elements of $E$ are similar.

    Now we can see that both sides of \eqref{eq:goal-2} are equal to
    \[
      P(E)
      \frac
      {
        \left|\left\{i\in\kappa(n+1) \mid y_i\le y\right\}\right|
      }
      {
        \left|\kappa(n+1)\right|
      }
    \]
    (cf.\ \eqref{eq:Venn-Q}).
  \end{proof}
\end{document}